%% file: main.tex
\documentclass[10pt,a4paper]{main}

\usepackage[authoryear,round,sort]{natbib}
\setcitestyle{authoryear,round,citesep={;},aysep={,},yysep={;}}
\input{math_commands.tex}

\usepackage{booktabs}
\usepackage{multirow}
\usepackage{enumitem}
\usepackage{graphicx}
\usepackage{subcaption}
\usepackage{float}
\usepackage{wrapfig}
\usepackage{fontawesome5}

\usepackage[table]{xcolor}
\usepackage[skins]{tcolorbox}
\usepackage{hyperref}
\usepackage{url}
\usepackage{cleveref}

\crefname{proposition}{proposition}{propositions}
\Crefname{proposition}{Proposition}{Propositions}

\definecolor{title_blue}{HTML}{204899}
\definecolor{cite_blue}{HTML}{044DC1}
\definecolor{cite_purple}{HTML}{7406A7}
\hypersetup{
  colorlinks=true,
  citecolor=cite_blue,
  linkcolor=cite_blue,
  urlcolor=cite_blue
}

\definecolor{eapoteal}{HTML}{16866C}
\definecolor{eapored}{HTML}{BE3C2D}
\definecolor{eapopurple}{HTML}{5E338B}
\definecolor{eapoblue}{HTML}{3777BD}
\colorlet{success}{eapoteal}
\colorlet{failure}{eapored}
\colorlet{ours}{eapoblue}
\newcommand{\EAPO}{{\textup{EAPO}}}
\newcommand{\surprisingsuccess}{\emph{\textbf{\textcolor{success}{surprising success}}}}
\newcommand{\repeatedfailure}{\emph{\textbf{\textcolor{failure}{repeated failure}}}}

\definecolor{qualitativemuted}{HTML}{62676C}
\definecolor{qualitativerule}{HTML}{C6C9CC}
\definecolor{qualitativeframe}{HTML}{555B61}
\definecolor{qualitativeproblem}{HTML}{E8EAEC}
\newtcolorbox{qualitativebox}[1][]{
  enhanced, colback=white, colframe=qualitativerule, boxrule=0.55pt,
  arc=2.5pt, outer arc=2.5pt, boxsep=0pt,
  left=8pt, right=8pt, top=7pt, bottom=7pt,
  before skip=6pt, after skip=0pt,
  #1
}
\newenvironment{qualitativerow}{%
  \begin{qualitativebox}[frame hidden,boxrule=0pt,sharp corners,top=8pt,bottom=8pt,before skip=0pt]%
}{\end{qualitativebox}}
\newcommand{\qualitativeseparator}{\par{\color{qualitativerule}\hrule height 0.55pt}}
\newcommand{\qualitativestep}[1]{%
  \par\vspace{2pt}{\centering $\displaystyle #1$\par}\vspace{2pt}%
}
\newcommand{\qualitativeomitted}{\textcolor{qualitativemuted}{[\ldots]}}

\title{Surprising Success, Repeated Failure:\\Entropy-Guided Credit Assignment for\\Exploration in LLM Reasoning}

\author{Woongyeong Yeo\textsuperscript{1},
    Minki Kang\textsuperscript{1},
    Chanuk Lee\textsuperscript{1},
    Sangwoo Park\textsuperscript{1},
    Jinheon Baek\textsuperscript{1$\dagger$},
    Sung Ju Hwang\textsuperscript{1,2$\dagger$}
    \\
    \textsuperscript{1}KAIST\quad\textsuperscript{2}DeepAuto.ai
    \quad{\footnotesize(\textsuperscript{$\dagger$}: Corresponding authors)}\\
    \faEnvelope[regular]~\texttt{\{wgcyeo, jinheon.baek, sungju.hwang\}@kaist.ac.kr}\\
    \faGlobe~\textbf{Project Page:} \url{https://eapo-explore.github.io}
}

\input{sections/0_abstract}

\begin{document}

\maketitle

\input{sections/1_introduction}
\input{sections/2_preliminaries}
\input{sections/3_motivation}
\input{sections/4_method}
\input{sections/5_experiments}
\input{sections/6_related_work}
\input{sections/7_conclusion}

\bibliography{main}
\bibliographystyle{main}

\clearpage
\appendix
\input{sections/appendix/1_extended_related_work}
\input{sections/appendix/2_add_exp_details}

\input{sections/appendix/3_add_exp_results}
\input{sections/appendix/4_theoretical_analysis}
\input{sections/appendix/5_qualitative_example}

\end{document}

%% file: math_commands.tex
\usepackage{amsmath,amsfonts,amsthm,bm}

\theoremstyle{plain}
\newtheorem{proposition}{Proposition}

\newcommand{\figleft}{{\em (Left)}}
\newcommand{\figcenter}{{\em (Center)}}
\newcommand{\figright}{{\em (Right)}}

\newcommand{\captiona}{{\em (a)}}
\newcommand{\captionb}{{\em (b)}}

\def\eqref#1{equation~\ref{#1}}

\def\1{\bm{1}}

\def\ve{{\bm{e}}}

\def\vp{{\bm{p}}}

\def\vz{{\bm{z}}}

\DeclareMathAlphabet{\mathsfit}{\encodingdefault}{\sfdefault}{m}{sl}
\SetMathAlphabet{\mathsfit}{bold}{\encodingdefault}{\sfdefault}{bx}{n}

\newcommand{\E}{\mathbb{E}}

\newcommand{\softmax}{\mathrm{softmax}}

\newcommand{\KL}{D_{\mathrm{KL}}}
\newcommand{\Var}{\mathrm{Var}}

\newcommand{\Cov}{\mathrm{Cov}}

\DeclareMathOperator{\sign}{sign}

%% file: sections/0_abstract.tex
\begin{abstract}

Reinforcement learning with verifiable rewards (RLVR) enhances reasoning in large language models (LLMs) through outcome-level feedback, yet recent approaches to finer-grained credit assignment often require auxiliary models, additional sampling, or privileged information. Although policy entropy provides a readily available signal, prioritizing uncertain positions under both reinforcement and penalization concentrates penalties where failed responses still retain alternatives for recovery, which can suppress opportunities for exploration. To address this, we introduce \emph{Entropic Advantage Policy Optimization} (\textbf{\EAPO{}}), an entropy-guided credit assignment method that treats success and failure asymmetrically. Specifically, motivated by the observation that success under uncertainty is less repeatable while confident failures tend to recur, \EAPO{} couples normalized policy entropy with the sign of the response advantage to reinforce \surprisingsuccess{} and correct \repeatedfailure{}. It assigns stronger reinforcement to high-entropy decisions in successful responses and stronger penalties to low-entropy decisions in failed responses, while attenuating penalties at uncertain positions to preserve opportunities for recovery. By redistributing the response advantage across tokens, \EAPO{} derives token-level credit directly from existing rollout signals without additional supervision. We validate \EAPO{} on a range of reasoning tasks across both base and reasoning backbones, demonstrating that it achieves the best overall performance. We further show that \EAPO{} promotes more effective exploration, broadening problem coverage and generating more diverse candidate answers.

\end{abstract}

%% file: sections/1_introduction.tex
\input{figures/concept}

\section{Introduction}
\label{sec:introduction}

Reinforcement learning with verifiable rewards (RLVR)~\citep{rlvr-tulu3, grpo, deepseek-r1} improves the reasoning capabilities of large language models (LLMs) through outcome-level supervision. The resulting reward evaluates a trajectory as a whole, although the decisions within it may contribute differently to the final outcome. This motivates finer-grained credit assignment that allocates learning feedback to individual reasoning decisions. However, existing approaches to obtaining such feedback require auxiliary models, additional sampling, or access to privileged information~\citep{prime, vineppo, rlrt}.

These requirements motivate the use of \emph{policy entropy}, a measure of next-token uncertainty directly available from the rollout policy, as a signal for token-level feedback allocation. High entropy reflects competing continuations and potential branching points for exploration, whereas low entropy indicates concentrated preferences~\citep{forking-tokens-8020, entropyadv}. Beyond entropy regularization for exploration~\citep{a3c-entropy-bonus}, existing methods select high-entropy tokens for policy updates~\citep{forking-tokens-8020} or reshape token-level advantages~\citep{entropyadv, hapo}. However, these methods treat uncertainty in the same way under success and failure, either adding nonnegative entropy bonuses regardless of outcome or favoring high-entropy positions under both positive and negative feedback. Such a shared preference places the strongest penalties on uncertain positions in failed responses, where competing continuations may still lead to recovery through alternative reasoning paths, potentially constraining further exploration. This raises a central question: \emph{should uncertainty guide both reinforcement and penalization in the same way?}

We examine this question through the asymmetry between reinforcing success and penalizing failure. In a successful trajectory, a high-entropy decision selects among competing continuations, so the observed success may reflect an exploratory path that has not yet become a stable behavioral preference. Stronger reinforcement at these points can help consolidate successful exploration into more repeatable behavior. In contrast, low-entropy decisions within an unsuccessful trajectory can reflect concentrated preferences that favor similar failing continuations, motivating stronger correction of confident patterns associated with failure to discourage their recurrence. Meanwhile, in the same unsuccessful trajectory, high-entropy positions retain competing alternatives, so concentrating penalties on these positions risks constraining opportunities for exploration and recovery. These two observations motivate reinforcing \surprisingsuccess{} and correcting \repeatedfailure{} while preserving uncertain alternatives, without treating entropy as a measure of individual-token correctness.

To incorporate this asymmetry into token-level credit assignment, we introduce \textbf{E}ntropic \textbf{A}dvantage \textbf{P}olicy \textbf{O}ptimization (\textbf{\EAPO{}}), which couples policy entropy with the correctness feedback (i.e., the advantage sign) to reverse the entropy preference between reinforcement and penalization (\Cref{fig:concept}). Specifically, \EAPO{} reweights the response advantage across tokens, assigning stronger reinforcement to high-entropy decisions when the advantage is positive and stronger penalties to low-entropy decisions when it is negative. By reinforcing surprising success and correcting repeated failure in this way, \EAPO{} makes successful exploration more repeatable while attenuating penalties at uncertain positions within unsuccessful responses to preserve alternative reasoning paths.

We validate \EAPO{} on a range of tasks, including mathematical, logical, and algorithmic reasoning, across both base and reasoning backbones. \EAPO{} achieves the best overall performance, surpassing existing methods for promoting exploration in both average accuracy and problem coverage. Our analyses show that \EAPO{} broadens problem coverage under varying test-time sampling budgets and maintains greater answer diversity on challenging problems, while also confirming the benefits of reinforcing high-entropy decisions and penalizing low-entropy decisions. Together, these findings highlight the benefits of treating uncertainty differently under success and failure to promote more effective exploration in LLM reasoning while preserving alternative reasoning paths.

%% file: figures/concept.tex
\begin{figure}[H]
    \vspace{0.05in}
    \centering
    \includegraphics[width=\linewidth]{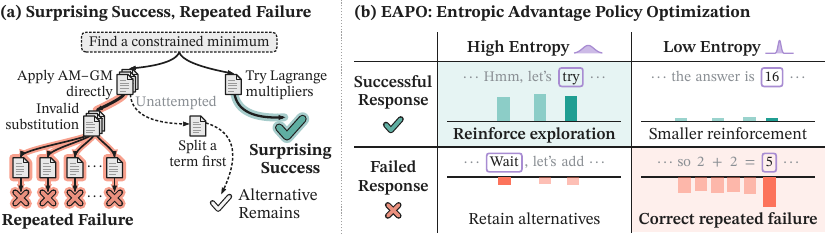}
    \vspace{-0.25in}
    \caption{\textbf{Conceptual illustration.} \captiona{} Success under uncertainty is rarely sampled and thus less repeatable, whereas confident failures keep recurring across many rollouts. \captionb{} \EAPO{} reweights token-level advantages by entropy to reinforce \surprisingsuccess{} at high-entropy positions and correct \repeatedfailure{} at low-entropy positions, while preserving uncertain alternatives.}
    \label{fig:concept}
    \vspace{-0.05in}
\end{figure}

%% file: sections/2_preliminaries.tex
\section{Preliminaries}
\label{sec:preliminaries}

\paragraph{Group Relative Policy Optimization.}
Reinforcement learning with verifiable rewards (RLVR) trains a policy $\pi_\theta$ using rewards computed by task-specific verifiers~\citep{rlvr-tulu3, deepseek-r1}. As one instantiation, group relative policy optimization (GRPO)~\citep{grpo} samples a group of $G>1$ responses $\{y_1,\ldots,y_G\}\sim\pi_\theta(\,\cdot\mid x)$ for each prompt $x$ and normalizes the corresponding response-level rewards $r^i=r(x,y_i)$ to obtain the response advantage
\begin{equation}
    \hat{A}^i
    = \frac{r^i-\operatorname{mean}(\{r^j\}_{j=1}^{G})}
        {\operatorname{std}(\{r^j\}_{j=1}^{G})+\varepsilon}.
    \label{eq:group_advantage}
\end{equation}
The resulting advantage is shared uniformly across all tokens within the response, $\hat{A}_t^i=\hat{A}^i$, and therefore provides no position-specific weighting of the learning signal, treating all token decisions within a response as equally responsible for the final reward.

\paragraph{Policy Entropy.}
For a generation prefix $(x,y_{i,<t})$ and vocabulary $\mathcal{V}$, we define the \emph{policy entropy} as the Shannon entropy~\citep{shannon-entropy} of the policy's next-token distribution:
\begin{equation}
    H_{i,t}
    = -\sum_{v\in\mathcal{V}}
        \pi_\theta(v\mid x,y_{i,<t})
        \log\pi_\theta(v\mid x,y_{i,<t}).
    \label{eq:token_entropy}
\end{equation}
Unlike sampled-token surprisal, $-\log\pi_\theta(y_{i,t}\mid x,y_{i,<t})$, which measures the unexpectedness of a single realized token, $H_{i,t}$ characterizes uncertainty over the full next-token distribution. Higher entropy indicates greater uncertainty among possible continuations, whereas lower entropy reflects more concentrated preferences, with high-entropy positions often associated with branching and exploratory behavior in reasoning~\citep{forking-tokens-8020, entropyadv}.

%% file: sections/3_motivation.tex
\section{Surprising Success, Repeated Failure}
\label{sec:motivation}

\input{figures/word_entropy}

Then, how does policy entropy relate to reasoning success? While prior work has highlighted high-entropy positions as branching points for exploring alternatives~\citep{forking-tokens-8020, entropyadv}, this link to exploration does not by itself indicate whether success under uncertainty can be reproduced or whether failure can be avoided even when the policy is confident. In this section, we examine the relationship between policy entropy and successful exploration in reasoning.

\paragraph{High-entropy success is less repeatable, while failure retains alternatives.}

To examine how entropy relates to reproducible success and recovery from failure, we analyze Qwen3-4B/8B-Base on 20 moderately difficult mathematical reasoning problems per model. For each problem, we randomly select two correct and two incorrect responses. Within each selected response, we compare contiguous 32-token windows with high and low mean token entropy, matching their relative-position ranges to control for where resampling begins in the reasoning trajectory. From the fixed prefix preceding each window, we sample 64 continuations and measure their overlap within the first 32 generated tokens and their final-answer accuracy. Further details are provided in \Cref{app:resampling_details}.

\input{tables/regen_entropy}

As shown in \Cref{tab:regen_entropy}, originally correct responses are substantially less likely to succeed again when resampled from high- rather than low-entropy windows, which makes success through uncertainty a \surprisingsuccess{}. In contrast, originally incorrect responses tend to remain incorrect when resampled from low-entropy windows, indicating \repeatedfailure{} despite the policy's confidence. Meanwhile, resampling from high-entropy windows of the same incorrect responses yields higher accuracy, suggesting that these responses retain alternative paths to success despite recurrent failure at low-entropy windows. High-entropy windows also yield lower overlap across both outcomes, supporting their role in exploratory branching. We provide a qualitative example illustrating these trends in \Cref{app:qualitative_example_succ_fail}. These findings highlight the need to make surprising success more repeatable and correct repeated failure at low-entropy positions, while preserving alternative paths to recovery at high-entropy positions in unsuccessful responses.

\paragraph{Token patterns reflect successful exploration and confident failure.}

To characterize the reasoning behaviors behind these trends, we analyze 2,376 Qwen3-4B-Base responses to mathematical reasoning problems, comparing normalized word frequencies between the top and bottom 10\% entropy tails of each correct and incorrect response. As shown in \Cref{fig:word_entropy}, high-entropy positions in correct responses are enriched with exploratory expressions (e.g., ``let's'' and ``consider''), whereas conclusion markers (e.g., ``finally'' and ``confirm'') are more prevalent at low-entropy positions in incorrect responses. These patterns mirror the resampling results, suggesting that \surprisingsuccess{} arises from \emph{successful exploration} that has not yet become a stable preference, while \repeatedfailure{} reflects \emph{confident failure}, in which the policy firmly commits to an incorrect conclusion. Meanwhile, exploratory expressions (e.g., ``try'' and ``instead'') also appear at high-entropy positions in incorrect responses, indicating that failed responses still retain alternatives for recovery, consistent with the gains from high-entropy resampling in \Cref{tab:regen_entropy}. Together, these observations motivate asymmetric credit assignment: stronger reinforcement of high-entropy decisions can make successful exploration repeatable, whereas concentrating penalties on low-entropy decisions can correct confident failure while sparing the exploratory alternatives that unsuccessful responses retain.

%% file: figures/word_entropy.tex
\begin{figure}[t]
    \centering
    \vspace{-0.1in}
    \captionsetup[subfigure]{position=top,font=small,skip=2pt,justification=centering}
    \begin{subfigure}[t]{0.52\linewidth}
        \centering
        \caption{Correct responses}
        \label{fig:word_entropy_correct}
        \includegraphics[width=\linewidth]{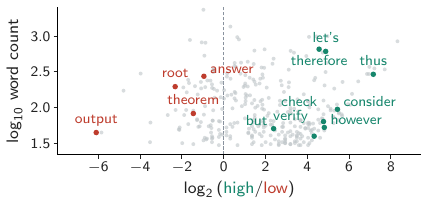}
    \end{subfigure}%
    \begin{subfigure}[t]{0.48\linewidth}
        \centering
        \caption{Incorrect responses}
        \label{fig:word_entropy_incorrect}
        \includegraphics[width=\linewidth]{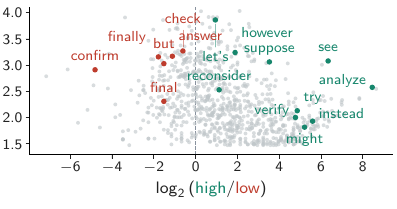}
    \end{subfigure}
    \vspace{-0.15in}
    \caption{\textbf{Word usage by token entropy.} Green and red mark words with higher relative frequencies in the top and bottom 10\% entropy tails of each Qwen3-4B-Base response, respectively. Exploratory expressions are enriched at high-entropy positions in both correct and incorrect responses, whereas conclusion markers are enriched at low-entropy positions in incorrect responses.}
  \label{fig:word_entropy}
  \vspace{-0.1in}
\end{figure}

%% file: tables/regen_entropy.tex
\begin{wraptable}[12]{r}{0.45\textwidth}
    \centering
    \vspace{-0.18in}
    \caption{Continuation resampling by original response outcome and window entropy. $\Delta\bar{r}$ denotes the reward difference (High $-$ Low).}
    \label{tab:regen_entropy}
    \vspace{-0.1in}
    \small
    \setlength{\tabcolsep}{3pt}
    \begin{tabular*}{\linewidth}{@{\extracolsep{\fill}}lccccr@{}}
        \toprule
        & \multicolumn{2}{c}{\textbf{Overlap}}
        & \multicolumn{2}{c}{\textbf{Reward}} \\
        \cmidrule(lr){2-3}\cmidrule(lr){4-5}
        Source & Low & High & Low & High & $\Delta\bar{r}$~(pp) \\
        \midrule
        \multicolumn{6}{@{}l@{}}{\textbf{Qwen3-4B-Base}} \\
        Correct   & 29.1 & 4.1 & 0.578 & 0.390 & $\textcolor{success}{-18.8}$ \\
        Incorrect & 27.8 & 7.3 & 0.070 & 0.163 & $\textcolor{failure}{+9.3}$ \\
        \midrule
        \multicolumn{6}{@{}l@{}}{\textbf{Qwen3-8B-Base}} \\
        Correct   & 29.6 & 4.3 & 0.625 & 0.418 & $\textcolor{success}{-20.7}$ \\
        Incorrect & 30.2 & 6.9 & 0.032 & 0.159 & $\textcolor{failure}{+12.7}$ \\
        \bottomrule
    \end{tabular*}
    \vspace{-0.1in}
\end{wraptable}

%% file: sections/4_method.tex
\section{Entropic Advantage Policy Optimization}
\label{sec:method}

We present \textbf{E}ntropic \textbf{A}dvantage \textbf{P}olicy \textbf{O}ptimization (\textbf{\EAPO{}}), which operationalizes this asymmetric credit assignment by redistributing the response advantage across completion tokens using only the policy's own entropy, assigning stronger reinforcement to uncertain decisions in positive-advantage rollouts and stronger penalties to confident decisions in negative-advantage rollouts.

\subsection{Asymmetric Advantage Reweighting via Sign--Entropy Coupling}
\label{sec:eapo_entropy_allocation}

An entropy preference shared across advantage signs would prioritize the same positions under reinforcement and penalization, so favoring high entropy would also concentrate penalties on uncertain decisions in negative-advantage responses. To address this, we couple entropy with the advantage sign, reversing the preference between reinforcement and penalization.

\paragraph{Policy Entropy Normalization.}
Absolute token entropies can vary substantially across responses and fluctuate sharply within a response, making absolute entropy-based reweighting sensitive to noise and potentially destabilizing policy optimization. To address this, we normalize token entropies using shared percentiles within each rollout batch $\mathcal{B}$. Specifically, we evaluate the entropy $H_{i,t}$ in \Cref{eq:token_entropy} under the old policy $\pi_{\mathrm{old}}$. For response $i$, let $\mathcal{I}_i$ denote valid completion positions, excluding prompt and padding tokens, and let $T_i=|\mathcal{I}_i|>0$. We define
\begin{equation}
    h_{i,t}
    = \operatorname{stopgrad}\!\left[
        \operatorname{clip}\!\left(
        \frac{H_{i,t}-Q_{\mathrm{lo}}}{Q_{\mathrm{hi}}-Q_{\mathrm{lo}}+\varepsilon_H},0,1
        \right)
    \right],
    \qquad t\in\mathcal{I}_i,
    \label{eq:normalized_entropy}
\end{equation}
where $Q_{\mathrm{lo}}$ and $Q_{\mathrm{hi}}$ are the 10th and 90th percentiles of valid completion-token entropies in $\mathcal{B}$. We use clipping to limit the influence of extreme entropies and $\operatorname{stopgrad}$ to use the entropies as detached credit assignment signals, preventing gradients from propagating through the reweighting factors.

\paragraph{Asymmetric Advantage Redistribution.}
We design token-level credit to depend jointly on the advantage sign and normalized entropy. Given the response advantage $\hat{A}^i$ in \Cref{eq:group_advantage}, we express this coupling through the signed uncertainty $\textcolor{eapopurple}{\sign(\hat{A}^i)h_{i,t}}$. For $t\in\mathcal{I}_i$ and $\kappa\ge0$, we define
\begin{equation}
    w_{i,t}
    = \frac{\exp\!\left(\kappa\,\textcolor{eapopurple}{\sign(\hat{A}^i)h_{i,t}}\right)}
        {\frac{1}{T_i}\sum_{u\in\mathcal{I}_i}
        \exp\!\left(\kappa\,\textcolor{eapopurple}{\sign(\hat{A}^i)h_{i,u}}\right)},
    \qquad \hat{A}_t^{i,\mathrm{E}}=\hat{A}^iw_{i,t}.
    \label{eq:entropic_allocation}
\end{equation}
The exponential weighting in \Cref{eq:entropic_allocation} favors larger signed uncertainty, with $\kappa$ controlling concentration. For $\kappa>0$, this assigns greater weight to high-entropy decisions when $\hat{A}^i>0$ and to low-entropy decisions when $\hat{A}^i<0$, with the ratio between any two weights within a response bounded by $e^\kappa$. Dividing by the within-response mean of the exponential scores ensures that the advantage is redistributed across tokens rather than rescaled for the response as a whole. We then multiply $\hat{A}^i$ by $w_{i,t}$ to obtain the token-level advantage $\hat{A}_t^{i,\mathrm{E}}$. Since the weights are positive, they preserve the sign of $\hat{A}^i$ while determining how strongly each decision is reinforced or penalized.

\paragraph{Policy Optimization.}
\EAPO{} simply replaces the response-level advantage $\hat{A}^i$ with the token-level advantage $\hat{A}_t^{i,\mathrm{E}}$ in the original policy objective, leaving the rest of the optimization procedure unchanged. Since token-level credit is derived directly from the policy's own entropy and the existing response advantage, \EAPO{} requires no auxiliary models, token-level supervision, or substantial additional computation, making it readily applicable to existing policy optimization pipelines.

\subsection{Theoretical Interpretation}
\label{sec:eapo_theory}

We now turn to the theoretical interpretation of \EAPO{}, connecting its asymmetric credit allocation to KL anchoring in token space and examining how penalty attenuation affects conditional entropy and distributional distortion among unchosen alternatives compared with uniform weighting.

\paragraph{Reinterpreting KL Anchoring in Token Space.}
KL-regularized RLVR anchors the policy to a reference policy $\pi_{\mathrm{ref}}$ through $\KL(\pi_\theta\Vert\pi_{\mathrm{ref}})$ while optimizing verifiable rewards~\citep{grpo}. We transfer this principle from policy space to credit allocation over token positions, with uniform credit as the reference and signed uncertainty $\sign(\hat{A}^i)h_{i,t}$ as the utility.

\begin{proposition}[KL-regularized uncertainty allocation]
\label{prop:kl_allocation}
Let $u_{i,t}=1/T_i$ be uniform credit over $\mathcal{I}_i$, and let $\Delta(\mathcal{I}_i)$ denote the probability simplex on these positions. For $\kappa>0$, the problem
\begin{equation}
    \max_{q\in\Delta(\mathcal{I}_i)}
    \left\{
        \sign(\hat{A}^i)\E_{t\sim q}[h_{i,t}]
        -\frac{1}{\kappa}\KL(q\Vert u_i)
    \right\}
    \label{eq:kl_allocation}
\end{equation}
has the unique solution $q^*_{i,t}\propto u_{i,t}\exp\!\left(\kappa\sign(\hat{A}^i)h_{i,t}\right)$.
\end{proposition}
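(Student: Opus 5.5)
This is the Gibbs variational principle (the Donsker--Varadhan duality on a finite set), and we prove it directly. Write $s_t=\sign(\hat{A}^i)h_{i,t}$ for $t\in\mathcal{I}_i$. Define the candidate
\begin{equation*}
q^*_{i,t}=\frac{u_{i,t}e^{\kappa s_t}}{Z},\qquad Z=\sum_{u\in\mathcal{I}_i}u_{i,u}e^{\kappa s_u}.
\end{equation*}
Since $T_i>0$ and every term is positive, $Z>0$, so $q^*$ is a well-defined element of $\Delta(\mathcal{I}_i)$ with full support. The plan is to show that the objective equals a constant minus a KL divergence to $q^*$.

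For any $q\in\Delta(\mathcal{I}_i)$, use the convention $0\log 0=0$; then $\KL(q\Vert u_i)$ is finite because $u_i$ has full support. Solving for $u_{i,t}$ gives $\log u_{i,t}=\log q^*_{i,t}-\kappa s_t+\log Z$. Substituting this into $\KL(q\Vert u_i)=\sum_t q_t\log(q_t/u_{i,t})$ yields
\begin{equation*}
\KL(q\Vert u_i)=\KL(q\Vert q^*)+\log Z-\kappa\,\E_{t\sim q}[s_t].
\end{equation*}
Dividing by $\kappa>0$ and rearranging, the objective in \Cref{eq:kl_allocation} becomes
\begin{equation*}
\E_{t\sim q}[s_t]-\tfrac{1}{\kappa}\KL(q\Vert u_i)=\tfrac{1}{\kappa}\log Z-\tfrac{1}{\kappa}\KL(q\Vert q^*).
\end{equation*}
Here $\E_{t\sim q}[s_t]$ is exactly $\sign(\hat{A}^i)\E_{t\sim q}[h_{i,t}]$, because the sign factor is constant over $t$.

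By Gibbs' inequality, $\KL(q\Vert q^*)\ge0$, with equality if and only if $q=q^*$. This gives existence, since $q^*$ attains the maximum value $\frac1\kappa\log Z$, and it also gives uniqueness.

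The only delicate point is the boundary of the simplex, where some $q_t=0$. The identity above holds there as well, since zero terms vanish on both sides and $q^*$ has full support. So no Lagrangian or interior argument is needed.

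As an alternative route, one could use Lagrange multipliers together with the strict concavity of $-\KL(\cdot\Vert u_i)$. However, the decomposition argument avoids the boundary issue cleanly. Finally, substituting $u_{i,t}=1/T_i$ shows that $T_i q^*_{i,t}$ equals $w_{i,t}$ from \Cref{eq:entropic_allocation}.
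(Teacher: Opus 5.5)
Your argument is the same as the paper's. Define $q^*$ and $Z$, rewrite the objective as $\frac{1}{\kappa}\log Z-\frac{1}{\kappa}\KL(q\Vert q^*)$, conclude existence and uniqueness from Gibbs' inequality, and identify $T_i q^*_{i,t}=w_{i,t}$.

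One computational slip needs fixing. Substituting $\log u_{i,t}=\log q^*_{i,t}-\kappa s_t+\log Z$ into $\sum_t q_t\log(q_t/u_{i,t})$ gives $\KL(q\Vert u_i)=\KL(q\Vert q^*)-\log Z+\kappa\,\E_{t\sim q}[s_t]$. You wrote $+\log Z-\kappa\,\E_{t\sim q}[s_t]$ instead, with both signs flipped. With your signs, dividing by $\kappa$ would give the objective as $2\E_{t\sim q}[s_t]-\frac{1}{\kappa}\log Z-\frac{1}{\kappa}\KL(q\Vert q^*)$. That is not of the constant-minus-KL form, so your next line does not actually follow from the identity as you stated it.

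With the corrected identity, your final decomposition follows directly. The proof is then complete, including your handling of boundary points of the simplex.
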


We defer the proof to \Cref{app:kl_allocation_proof}. Our formulation in \Cref{eq:entropic_allocation} is therefore an operationalization of asymmetric credit assignment based on signed uncertainty, obtained by transferring KL anchoring to token space with uniform allocation as the reference. Increasing $\kappa$ weakens this anchor and strengthens the preference for high entropy under positive advantages and low entropy under negative advantages, while $\kappa\to0$ recovers uniform credit.

\paragraph{Understanding the Role of Penalty Attenuation.}
Motivated by the observation that updates with negative advantages can shift probability toward already likely alternatives~\citep{dpo-squeezing}, we analyze how this effect depends on update strength to understand \EAPO{}'s penalty attenuation at uncertain positions. Specifically, at a fixed prefix, we consider a single policy-gradient step on the logits with negative advantage $\hat{A}<0$ and update strength $\beta=\eta|\hat{A}|w\ge0$, where $w$ is the allocation weight and $\eta>0$ is the effective step size. Let $\nu^{(\beta)}$ denote the conditional distribution over unchosen tokens after the update, with $\nu=\nu^{(0)}$ denoting their initial conditional distribution.

\begin{proposition}[Concentration under negative updates]
\label{prop:alternative_concentration}
For a single negative logit update with finite logits and the initial distribution held fixed, the sampled-token probability strictly decreases with $\beta$, while $H(\nu^{(\beta)})$ is nonincreasing and $\KL(\nu^{(\beta)}\Vert\nu)$ is nondecreasing.
\end{proposition}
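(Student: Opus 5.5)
We first make the update explicit. At a fixed prefix, write the initial distribution as $p=\softmax(\vz)$ with finite logits $\vz$, and let $k$ be the sampled token. For the log-likelihood of $k$, the logit gradient is $\ve_k-p$. A policy-gradient step with negative advantage and strength $\beta\ge 0$ therefore gives
\begin{equation*}
\vz^{(\beta)}=\vz-\beta(\ve_k-p),
\end{equation*}
with $p$ held fixed, as the statement specifies. So $z^{(\beta)}_k=z_k-\beta(1-p_k)$, and for $v\neq k$ we have $z^{(\beta)}_v=z_v+\beta p_v$.

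\textbf{Sampled-token probability.} Write
\begin{equation*}
\pi^{(\beta)}_k=\Bigl(1+\sum_{v\neq k}\exp\bigl(z^{(\beta)}_v-z^{(\beta)}_k\bigr)\Bigr)^{-1}.
\end{equation*}
Each exponent equals $z_v-z_k+\beta(p_v+1-p_k)$. Since $p_k<1$ for finite logits, every coefficient $p_v+1-p_k$ is strictly positive. The sum is therefore strictly increasing in $\beta$, and $\pi^{(\beta)}_k$ strictly decreases. This step is routine.

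\textbf{Distribution over unchosen tokens.} Renormalizing over $v\neq k$ cancels the common shift, so
\begin{equation*}
\nu^{(\beta)}_v\propto \exp\bigl(z_v+\beta p_v\bigr)\propto \nu_v\exp(\beta p_v).
\end{equation*}
This is an exponential tilt of $\nu$ by the statistic $f(v)=p_v$. Since $\nu_v\propto p_v$, the tilt adds the most weight to the alternatives that are already likely. We then parametrize the family as $\nu^{(\beta)}_v\propto \exp\bigl(\log\nu_v+\beta f(v)\bigr)$, with log-partition $A(\beta)$ satisfying $A'=\E_{\nu^{(\beta)}}[f]$ and $A''=\Var_{\nu^{(\beta)}}(f)\ge 0$.

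For the KL divergence, $\KL(\nu^{(\beta)}\Vert\nu)=\beta A'(\beta)-A(\beta)+A(0)$, so its derivative is $\beta A''(\beta)\ge 0$. Hence the KL is nondecreasing.

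For the entropy, it is convenient to write
\begin{equation*}
H(\nu^{(\beta)})=-\E_{\nu^{(\beta)}}[\log\nu]-\beta A'+A.
\end{equation*}
Differentiating gives
\begin{equation*}
\frac{d}{d\beta}H(\nu^{(\beta)})=-\Cov_{\nu^{(\beta)}}(\log\nu, f)-\beta\Var_{\nu^{(\beta)}}(f)=-\Cov_{\nu^{(\beta)}}\bigl(\log\nu^{(\beta)},f\bigr).
\end{equation*}
The second form holds because $\log\nu^{(\beta)}=\log\nu+\beta f-\text{const}$.

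\textbf{The covariance sign (main obstacle).} It remains to show $\Cov_{\nu^{(\beta)}}(\log\nu^{(\beta)},f)\ge 0$. The key observation is that $\log\nu^{(\beta)}_v=\log p_v+\beta p_v+\text{const}$ is a nondecreasing function of $p_v=f(v)$. Two functions that are comonotone in the same variable have nonnegative covariance under any distribution, by Chebyshev's sum inequality; equivalently, write the covariance as $\tfrac12\sum_{v,v'}\nu_v\nu_{v'}(a_v-a_{v'})(b_v-b_{v'})$ and note that every term is nonnegative. Hence $H(\nu^{(\beta)})$ is nonincreasing, which completes the proof.

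The subtle point is to keep $p$ frozen at the initial distribution, as the statement specifies. This is exactly what makes the tilt statistic a monotone function of $\log\nu$, and therefore what makes the comonotonicity argument apply.
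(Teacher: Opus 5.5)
Your proposal is correct and follows essentially the same route as the paper. The three steps match the paper's proof:
\begin{itemize}
\item the log-odds argument gives strict decrease of the sampled-token probability;
\item the exponential-tilt form $\nu^{(\beta)}_v\propto\nu_v e^{\beta p_v}$ with log-partition derivatives gives $\frac{d}{d\beta}\KL(\nu^{(\beta)}\Vert\nu)=\beta\Var_{\nu^{(\beta)}}(p_v)\ge 0$;
\item a symmetric-pair (Chebyshev) covariance argument gives the entropy monotonicity.
\end{itemize}
The only cosmetic difference is that you merge the paper's two separately nonnegative terms, $\Cov_{\nu^{(\beta)}}(\log\nu_v,p_v)$ and $\beta\Var_{\nu^{(\beta)}}(p_v)$, into the single covariance $\Cov_{\nu^{(\beta)}}(\log\nu^{(\beta)}_v,p_v)$, which is nonnegative by the same comonotonicity reasoning.
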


Stronger penalties suppress the sampled token and favor already likely alternatives, sharpening the distribution over the remaining alternatives. Within this single-step logit model, reducing the penalty at a given position limits this sharpening and KL distortion. The proof and additional analyses are provided in \Cref{app:concentration_proof,app:allocation_implications}. Together with the findings in \Cref{sec:motivation}, this analysis supports attenuating penalties at uncertain positions while concentrating correction on repeated failure.

%% file: sections/5_experiments.tex
\section{Experiments}
\label{sec:experiments}

\input{tables/main_results}

\subsection{Experimental Setup}
\label{sec:experimental_setup}

\paragraph{Benchmarks \& Metrics.}
We evaluate \EAPO{} on six competition-level mathematical reasoning benchmarks: AIME24/25/26~\citep{aime}, HMMT26~\citep{hmmt}, AMC23~\citep{amc}, and the level-5 subset of MATH500~\citep{math500-verify-step-by-step}. We report avg@32, the mean accuracy over 32 sampled responses per problem, and pass@32, the probability of generating at least one correct response within 32 samples, which we estimate using the unbiased estimator of \citet{unbiased-passk}.

\paragraph{Baselines.}
We compare \EAPO{} against the initial backbone and five RLVR methods, including GRPO~\citep{grpo}. EntropyAdv~\citep{entropyadv} adds an entropy term to the advantage to encourage exploration, while HAPO~\citep{hapo} applies a bounded, sign-preserving adjustment based on normalized policy entropy to emphasize high-entropy positions. 80/20 (Forking Tokens)~\citep{forking-tokens-8020} restricts policy-gradient updates to the highest-entropy 20\% of tokens. Finally, RLRT~\citep{rlrt} reverses the self-distillation signal on correct rollouts, reinforcing tokens more probable under the student than under a teacher conditioned on privileged information.

\paragraph{Implementation Details.}
We primarily evaluate \EAPO{} across two model types, with Qwen3-4B-Base and Qwen3-8B-Base~\citep{qwen3} as base backbones and Qwen3-4B~\citep{qwen3} and Olmo-3-7B-Think-DPO~\citep{olmo3} as reasoning backbones. All optimization-based methods are trained on DAPO-Math-17k-Processed with a DAPO-style training configuration~\citep{dapo}. Further experimental and implementation details are provided in \Cref{app:add_exp_details}.

\subsection{Main Results}
\label{sec:main_results}

\Cref{tab:main_results} compares \EAPO{} with baseline methods across six mathematical reasoning benchmarks. \EAPO{} achieves the best overall performance, with the highest mean avg@32 and pass@32 for all four backbones. Specifically, with Qwen3-4B-Base and Qwen3-8B-Base, \EAPO{} attains mean accuracies of 31.0\% and 34.0\%, surpassing the strongest entropy-based baselines (EntropyAdv and 80/20, respectively) by substantial margins of 5.6 and 4.3 percentage points. Notably, \EAPO{} also outperforms the strongest baseline, RLRT, in both metrics on both base backbones even without relying on privileged information. Meanwhile, the gains extend to reasoning backbones, where \EAPO{} achieves mean accuracies of 72.4\% with Qwen3-4B and 74.3\% with Olmo-3-7B-Think-DPO, surpassing the strongest baseline in each case despite their substantially stronger initial performance. These results demonstrate consistent gains across benchmarks and model types.

\input{tables/ood_results}
\input{figures/training_dynamics}

\paragraph{OOD Generalization.}
To examine whether these improvements extend beyond mathematical reasoning, we evaluate the two base backbones on eight tasks from Reasoning Gym~\citep{reasoning-gym}, spanning logical, spatial, and algorithmic reasoning (with details of each task in \Cref{app:add_exp_details}). As shown in \Cref{tab:ood_results}, \EAPO{} achieves the highest macro-averaged scores of 34.89\% and 43.02\% with Qwen3-4B-Base and Qwen3-8B-Base, improving upon the respective best baselines, 80/20 and HAPO, by 1.46 and 3.02 percentage points. These results suggest that the benefits of \EAPO{} generalize to diverse reasoning tasks beyond the mathematical problems used for the optimization.

\subsection{Training Dynamics}
\label{sec:training_dynamics}

To understand how \EAPO{} behaves during training, we examine the evolution of training reward, response length, and average entropy. \Cref{fig:training_dynamics} shows that \EAPO{} maintains higher training rewards than the baselines over most of training, with its advantage becoming particularly pronounced near the end, further supporting its superiority over the baselines even during training. Meanwhile, entropy-based RLVR approaches consistently exhibit an overall increase in response length, producing longer responses than GRPO. \EAPO{} also follows this trend, sustaining longer reasoning trajectories as training progresses. However, longer responses alone do not establish more effective exploration. To examine whether response length explains the performance gains, we extend baseline responses following \citet{s1} and compare accuracy at comparable mean response lengths in \Cref{app:token_budget}, where additional continuations do not close the gap to \EAPO{}.

The average entropy exhibits an interesting trend throughout training. While retaining higher entropy than GRPO during later training, \EAPO{} ends with lower average entropy than EntropyAdv and HAPO. A key distinction is that \EAPO{} redistributes the original response-level advantage across tokens while preserving its within-response mean, whereas these baselines apply entropy-dependent additive adjustments. In particular, EntropyAdv adds a nonnegative entropy term to the advantage to encourage high-entropy actions and exhibits the highest final entropy in our experiments. \EAPO{}, however, achieves stronger reasoning performance without a substantial increase in entropy, suggesting that higher average entropy is not a prerequisite for more effective exploration.

\subsection{Analysis of Exploration}
\label{sec:exploration_analysis}

To assess whether \EAPO{}'s asymmetric design broadens exploration, we compare pass@$k$, response diversity, and epistemic marker frequency across different exploration methods.

\input{figures/exploration}

\paragraph{Exploration across Sampling Budgets.}
We evaluate pass@$k$ on AIME26 with Qwen3-4B-Base, varying the sampling budget from 1 to 256. As shown in \Cref{fig:exploration} (Left), entropy-based methods, EntropyAdv and HAPO, expand coverage over GRPO, but the gains remain modest. RLRT, which uses privileged information to guide exploration, also yields only a limited improvement over GRPO. In contrast, \EAPO{} consistently achieves the highest pass@$k$ across all sampling budgets, maintaining a substantial margin over GRPO and the other baselines. In particular, at $k=256$, \EAPO{} solves an additional problem that none of the baselines solve, suggesting that its exploration strategy broadens problem coverage even after extensive sampling.

\input{tables/ablation_studies}

\paragraph{Response Diversity \& Exploratory Cues.}
We further analyze the generated responses to assess whether \EAPO{} also promotes exploratory behavior during reasoning. Since precisely measuring diversity across full reasoning trajectories is impractical, we use final-answer diversity as a tractable proxy on challenging problems. Specifically, we consider all problems from the main Qwen3-4B-Base experiments, using $N=32$ sampled responses per problem and selecting those with avg@32 $\leq 25\%$ under the initial model. We measure normalized answer entropy ($H_{\mathrm{norm}}$) and the collision rate, the fraction of pairs of distinct sampled responses that produce the same final answer:
\begin{equation}
    C = \frac{\sum_i n_i(n_i-1)}{N(N-1)},
    \label{eq:answer_collision}
\end{equation}
where $n_i$ counts responses with final answer $i$, with $\sum_i n_i=N$. \Cref{tab:answer_diversity} shows that the other RLVR methods exhibit final-answer diversity comparable to GRPO, whereas \EAPO{} achieves higher normalized answer entropy and a lower collision rate. This suggests that \EAPO{} maintains broader exploration on challenging problems, producing a more diverse range of candidate solutions even when they are incorrect. Such diversity offers more opportunities to discover correct solutions, consistent with the broader coverage in the pass@$k$ analysis above.

Beyond answer diversity, we further examine exploratory behavior within the generated responses via \emph{epistemic markers}, tokens that signal exploratory behavior during reasoning (e.g., ``wait''), following \citet{epistemic-marker}. \Cref{fig:exploration} (Right) shows that, while other exploration-oriented methods tend to generate epistemic tokens more frequently than GRPO, \EAPO{} exhibits the highest frequency per 1,000 generated tokens across all six mathematical benchmarks by a substantial margin. This pattern suggests that \EAPO{}'s combination of exploration promotion and penalty attenuation encourages the generation of such cues, supporting continued exploration of alternative reasoning paths.

\subsection{Ablation Studies}
\label{sec:ablation_studies}

\paragraph{Effect of Sign--Entropy Coupling.}
To isolate how entropy preferences under reinforcement and penalization affect performance, we independently vary the allocation direction for positive- and negative-advantage responses. Specifically, we replace $\sign(\hat{A}^i)$ in \Cref{eq:entropic_allocation} with $b_{+}$ when $\hat{A}^i>0$ and $b_{-}$ when $\hat{A}^i<0$. Each coefficient is chosen from $\{-1,0,+1\}$, corresponding to low-entropy preference, uniform credit, or high-entropy preference, respectively. Thus, $(b_{+},b_{-})=(+1,-1)$ recovers \EAPO{}, whereas $(0,0)$ yields uniform token credit. We evaluate all nine combinations on Qwen3-4B-Base, reporting macro-averaged scores across the six benchmarks. \Cref{tab:sign_entropy_coupling} shows that performance improves as the entropy preference shifts from low to high for positive advantages and from high to low for negative advantages. Specifically, fixing high-entropy reinforcement ($b_{+}=+1$) and shifting penalization from high to low entropy improves accuracy by 5.64 percentage points. Consequently, $(b_{+},b_{-})=(+1,-1)$, the allocation used by \EAPO{}, achieves the highest avg@32 and pass@32, improving avg@32 over uniform credit by 6.52 percentage points, which supports \EAPO{}'s asymmetric allocation directions for reinforcement and penalization. We provide extended results with Qwen3-8B-Base, showing similar trends, in \Cref{app:ext_se_couple}.

\paragraph{Sensitivity to $\kappa$.}
\EAPO{} introduces a single hyperparameter, $\kappa$, to control the concentration of entropy-guided token credit, with $\kappa=\log K$ allowing token weights within a response to differ by up to a factor of $K$.
\input{tables/kappa_sensitivity}During training, we observe that larger $\kappa$ generally leads to faster improvements but also greater instability, indicating a trade-off between credit concentration and optimization stability. We select $\kappa=\log 4$ as the default based on the observed training stability. \Cref{tab:kappa_sensitivity} shows that increasing $\kappa$ generally improves overall performance. As $\kappa$ approaches zero, the allocation converges to uniform token credit (i.e., GRPO), and the accompanying performance drop supports using entropy to distinguish token contributions to reinforcement and penalization.

%% file: tables/main_results.tex
\begin{table}[t]
    \centering
    \caption{Results of diverse RLVR methods with base and reasoning models across six mathematical reasoning benchmarks. \textbf{Bold} denotes the best performance within each backbone.}
    \label{tab:main_results}
    \vspace{-0.1in}
    \small
    \setlength{\tabcolsep}{2.5pt}
    \resizebox{\linewidth}{!}{%
    \begin{tabular}{p{3.1cm}*{12}{>{\centering\arraybackslash}p{0.9cm}}}
        \toprule
        \multirow{2}{*}{\textbf{Method}}
        & \multicolumn{2}{c}{\textbf{AIME24}}
        & \multicolumn{2}{c}{\textbf{AIME25}}
        & \multicolumn{2}{c}{\textbf{AIME26}}
        & \multicolumn{2}{c}{\textbf{HMMT26}}
        & \multicolumn{2}{c}{\textbf{AMC23}}
        & \multicolumn{2}{c}{\textbf{MATH500-H}} \\
        \cmidrule(lr){2-3}\cmidrule(lr){4-5}\cmidrule(lr){6-7}
        \cmidrule(lr){8-9}\cmidrule(lr){10-11}\cmidrule(lr){12-13}
        & {\scriptsize Avg@32} & {\scriptsize Pass@32}
        & {\scriptsize Avg@32} & {\scriptsize Pass@32}
        & {\scriptsize Avg@32} & {\scriptsize Pass@32}
        & {\scriptsize Avg@32} & {\scriptsize Pass@32}
        & {\scriptsize Avg@32} & {\scriptsize Pass@32}
        & {\scriptsize Avg@32} & {\scriptsize Pass@32} \\
        \midrule
        \textit{Qwen3-4B-Base}
                   & 10.7 & 43.3 & 5.6 & 30.0 & 6.8 & 23.3 & 3.9 & 24.2 & 42.3 & 90.0 & 43.0 & 82.8 \\
        \quad GRPO       & 12.5 & 33.3 & 10.4 & 36.7 & 7.5 & 33.3 & 5.9 & 24.2 & 54.5 & 90.0 & 56.1 & 85.1 \\
        \quad EntropyAdv & 13.1 & 46.7 & 12.5 & 36.7 & 10.0 & 33.3 & 6.3 & 27.3 & 53.8 & 90.0 & 56.7 & 86.6 \\
        \quad HAPO       & 12.2 & 36.7 & 11.0 & 40.0 & 8.9 & 30.0 & 6.5 & 24.2 & 52.0 & 87.5 & 53.2 & 85.1 \\
        \quad 80/20      & 11.9 & 33.3 & 12.7 & 40.0 & 8.6 & 30.0 & 6.6 & 24.2 & 54.1 & \textbf{95.0} & 52.8 & 84.3 \\
        \quad RLRT       & 15.3 & 50.0 & 12.7 & 33.3 & 14.2 & 40.0 & 8.9 & \textbf{36.4} & 54.9 & \textbf{95.0} & 57.4 & 86.6 \\
        \rowcolor{ours!15}
        \quad \textbf{\EAPO{} (Ours)}
                   & \textbf{20.2} & \textbf{53.3} & \textbf{17.1} & \textbf{46.7} & \textbf{15.7} & \textbf{50.0} & \textbf{13.4} & 33.3 & \textbf{56.3} & 92.5 & \textbf{63.3} & \textbf{91.8} \\
        \midrule

        \textit{Qwen3-8B-Base}
                   & 12.8 & 46.7 & 13.4 & 43.3 & 9.0 & 33.3 & 5.8 & 27.3 & 54.0 & 92.5 & 47.7 & 85.1 \\
        \quad GRPO       & 15.4 & 40.0 & 13.4 & 40.0 & 10.5 & 33.3 & 5.5 & 24.2 & 57.3 & 87.5 & 58.6 & 88.1 \\
        \quad EntropyAdv & 15.9 & 33.3 & 15.2 & 43.3 & 9.9 & 40.0 & 6.0 & 27.3 & 57.8 & 87.5 & 59.2 & 85.8 \\
        \quad HAPO       & 15.6 & 40.0 & 12.9 & 26.7 & 8.4 & 36.7 & 5.1 & 27.3 & 61.5 & 87.5 & 57.4 & 87.3 \\
        \quad 80/20      & 17.2 & 46.7 & 15.1 & 33.3 & 13.9 & 40.0 & 8.5 & 27.3 & 60.6 & 92.5 & 63.1 & 87.3 \\
        \quad RLRT       & 21.7 & 56.7 & \textbf{20.1} & \textbf{46.7} & 17.3 & 56.7 & 10.6 & \textbf{30.3} & 64.2 & 95.0 & 64.6 & 91.0 \\
        \rowcolor{ours!15}
        \quad \textbf{\EAPO{} (Ours)}
                   & \textbf{24.1} & \textbf{63.3} & 19.3 & \textbf{46.7} & \textbf{17.9} & \textbf{60.0} & \textbf{11.2} & \textbf{30.3} & \textbf{65.7} & \textbf{97.5} & \textbf{65.9} & \textbf{95.5} \\
        \midrule

        \textit{Qwen3-4B}
                   & 67.9 & 83.3 & 61.7 & 86.7 & 60.6 & \textbf{86.7} & 40.1 & 66.7 & 95.2 & \textbf{100.0} & 90.0 & 95.5 \\
        \quad GRPO       & 70.6 & 83.3 & 62.8 & 86.7 & 63.2 & 83.3 & 40.9 & 69.7 & 94.9 & \textbf{100.0} & 89.8 & 96.3 \\
        \quad EntropyAdv & 73.0 & \textbf{86.7} & 62.2 & 86.7 & 64.0 & \textbf{86.7} & 42.2 & 69.7 & 95.9 & \textbf{100.0} & \textbf{90.6} & 96.3 \\
        \quad HAPO       & 72.0 & \textbf{86.7} & 64.1 & 83.3 & 62.0 & 83.3 & 41.6 & 66.7 & 94.9 & \textbf{100.0} & 89.9 & 96.3 \\
        \quad 80/20      & 71.4 & \textbf{86.7} & 63.2 & 83.3 & 61.0 & \textbf{86.7} & 40.8 & 66.7 & 94.3 & \textbf{100.0} & 90.5 & 96.3 \\
        \quad RLRT       & 72.5 & \textbf{86.7} & 63.2 & 86.7 & 62.7 & 83.3 & 42.8 & 69.7 & 95.9 & \textbf{100.0} & \textbf{90.6} & 96.3 \\
        \rowcolor{ours!15}
        \quad \textbf{\EAPO{} (Ours)}
                   & \textbf{73.5} & \textbf{86.7} & \textbf{65.0} & \textbf{90.0} & \textbf{64.8} & \textbf{86.7} & \textbf{43.2} & \textbf{72.7} & \textbf{97.7} & \textbf{100.0} & 90.2 & \textbf{97.0} \\
        \midrule

        \textit{Olmo-3-7B-Think-DPO}
                   & 74.1 & 90.0 & 60.1 & 80.0 & 66.3 & \textbf{90.0} & 42.7 & 72.7 & 95.5 & \textbf{100.0} & 90.1 & \textbf{96.3} \\
        \quad GRPO       & 75.9 & \textbf{93.3} & 62.2 & 83.3 & 68.4 & \textbf{90.0} & \textbf{46.0} & 69.7 & 95.8 & \textbf{100.0} & 91.4 & \textbf{96.3} \\
        \quad EntropyAdv & 74.1 & \textbf{93.3} & 64.3 & 83.3 & 69.4 & \textbf{90.0} & 45.7 & 72.7 & 94.6 & \textbf{100.0} & 90.4 & \textbf{96.3} \\
        \quad HAPO       & 74.2 & \textbf{93.3} & 63.3 & 86.7 & 66.4 & \textbf{90.0} & 43.8 & 69.7 & 95.7 & \textbf{100.0} & 90.6 & \textbf{96.3} \\
        \quad 80/20      & 75.6 & \textbf{93.3} & 65.1 & 83.3 & 69.4 & \textbf{90.0} & 44.9 & 75.8 & 96.0 & \textbf{100.0} & 91.3 & \textbf{96.3} \\
        \quad RLRT       & \textbf{77.3} & \textbf{93.3} & 63.4 & 83.3 & 68.1 & 86.7 & 44.6 & 69.7 & 95.2 & \textbf{100.0} & 91.3 & \textbf{96.3} \\
        \rowcolor{ours!15}
        \quad \textbf{\EAPO{} (Ours)}
                   & 76.7 & \textbf{93.3} & \textbf{65.3} & \textbf{90.0} & \textbf{70.3} & \textbf{90.0} & 45.5 & \textbf{78.8} & \textbf{96.2} & \textbf{100.0} & \textbf{91.8} & \textbf{96.3} \\
        \bottomrule
    \end{tabular}%
    }
\end{table}

%% file: tables/ood_results.tex
\begin{table}[t]
    \centering
    \caption{OOD generalization on Reasoning Gym tasks (avg@16). \textbf{Bold} denotes best performance.}
    \label{tab:ood_results}
    \vspace{-0.1in}
    \small
    \setlength{\tabcolsep}{4pt}
    \resizebox{\linewidth}{!}{%
    \begin{tabular}{p{3.1cm}*{9}{>{\centering\arraybackslash}p{1.05cm}}}
        \toprule
        \textbf{Method} & \textbf{Graph} & \textbf{Cube} & \textbf{Sudoku} & \textbf{Family} & \textbf{K\&K} & \textbf{Anag.} & \textbf{Zebra} & \textbf{Palin.} & \textbf{Avg.} \\
        \midrule
        \textit{Qwen3-4B-Base} & 6.75 & 24.63 & 40.00 & 17.00 & 29.13 & 37.50 & 29.50 & 9.00 & 24.19 \\
        \quad GRPO & 7.50 & 28.13 & 41.88 & 25.38 & 38.63 & 43.50 & 41.88 & 11.38 & 29.78 \\
        \quad EntropyAdv & 8.38 & 27.00 & 37.00 & 27.00 & 41.00 & 45.38 & 41.88 & 14.50 & 30.27 \\
        \quad HAPO & 10.13 & 27.13 & 40.38 & 30.13 & 43.63 & \textbf{47.88} & 47.50 & 14.13 & 32.61 \\
        \quad 80/20 & 11.38 & 26.75 & 39.25 & \textbf{31.50} & 49.63 & 46.38 & 47.38 & 15.13 & 33.43 \\
        \quad RLRT & 7.38 & 26.75 & 38.50 & 26.50 & 43.38 & 42.88 & 44.13 & 13.63 & 30.39 \\
        \rowcolor{ours!15}
        \quad \textbf{\EAPO{} (Ours)} & \textbf{12.88} & \textbf{28.25} & \textbf{42.13} & 30.88 & \textbf{52.88} & 46.00 & \textbf{48.38} & \textbf{17.75} & \textbf{34.89} \\
        \midrule
        \textit{Qwen3-8B-Base} & 12.25 & 22.88 & 39.38 & 38.38 & 33.88 & 38.63 & 44.88 & 9.00 & 29.91 \\
        \quad GRPO & 18.50 & 25.63 & 40.25 & 42.50 & 56.88 & 54.00 & 49.50 & 19.13 & 38.30 \\
        \quad EntropyAdv & 16.75 & 22.25 & 47.88 & 44.25 & 49.38 & 55.63 & 54.25 & 18.13 & 38.56 \\
        \quad HAPO & 20.13 & 24.63 & 40.88 & 45.63 & 59.13 & 56.50 & 54.25 & 18.88 & 40.00 \\
        \quad 80/20 & 17.75 & 23.13 & 37.38 & 44.75 & 60.13 & 52.63 & 51.00 & 18.88 & 38.21 \\
        \quad RLRT & 19.00 & 22.38 & 46.25 & 40.75 & 48.50 & 43.88 & 51.75 & 18.75 & 36.41 \\
        \rowcolor{ours!15}
        \quad \textbf{\EAPO{} (Ours)} & \textbf{20.50} & \textbf{26.25} & \textbf{52.25} & \textbf{46.25} & \textbf{60.50} & \textbf{62.63} & \textbf{54.50} & \textbf{21.25} & \textbf{43.02} \\
        \bottomrule
    \end{tabular}%
    }
\end{table}

%% file: figures/training_dynamics.tex
\begin{figure}[t!]
    \centering
    \includegraphics[width=\linewidth]{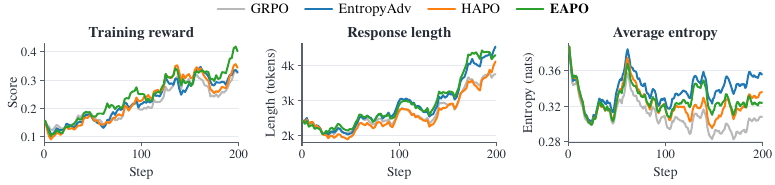}
    \vspace{-0.25in}
    \caption{\textbf{Training dynamics.} \figleft{} Training reward on Qwen3-4B-Base; \figcenter{} response length and \figright{} average entropy on Qwen3-4B, using a trailing 15-step moving average.}
    \label{fig:training_dynamics}
\end{figure}

%% file: figures/exploration.tex
\begin{figure}[t!]
    \centering
    \includegraphics[width=\linewidth]{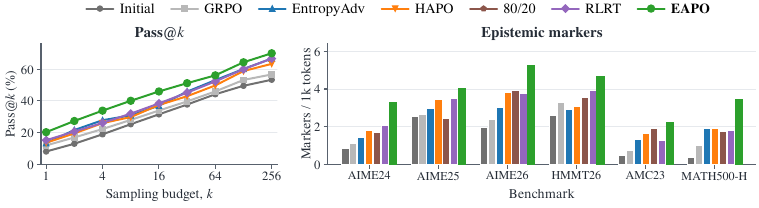}
    \vspace{-0.25in}
    \caption{\textbf{Analysis of exploration.} \figleft{} Pass@$k$ curves for Qwen3-4B-Base on AIME26. \figright{} Frequency of epistemic markers (per 1,000 generated tokens) across six mathematics benchmarks.}
    \label{fig:exploration}
\end{figure}

%% file: tables/ablation_studies.tex
\begin{table}[t!]
    \centering
    \input{tables/answer_diversity}\hfill
    \begin{minipage}[t]{0.61\linewidth}
        \caption{\textbf{Ablation on sign--entropy coupling.} Macro average of avg@32 / pass@32 (\%) across six benchmarks. Our default \EAPO{} configuration is {\setlength{\fboxsep}{0.5pt}\colorbox{ours!15}{shaded}}. Best scores are \textbf{bolded}.}
        \label{tab:sign_entropy_coupling}
        \vspace{-0.1in}
        \small
        \setlength{\tabcolsep}{1pt}
        \renewcommand{\arraystretch}{1.25}
        \begin{tabular*}{\linewidth}{@{\extracolsep{\fill}}clccc@{}}
            \toprule
            & \textbf{Entropy} & \multicolumn{3}{c}{\textbf{\textcolor{failure}{Penalization}} ($\hat{A}^i<0$): $b_{-}$} \\
            & \textbf{Preference}
            & Low ($-1$) & Uniform ($0$) & High ($+1$) \\
            \midrule
            \multirow{3}{*}{\shortstack{\textbf{\textcolor{success}{Reinforcement}}\\($\hat{A}^i>0$): $b_{+}$}}
            & High ($+1$)
            & \cellcolor{ours!15}\textbf{31.00\,/\,61.27}
            & 26.73\,/\,53.83
            & 25.36\,/\,52.09 \\
            & Uniform ($0$)
            & 26.08\,/\,51.26
            & 24.48\,/\,50.43
            & 22.89\,/\,50.03 \\
            & Low ($-1$)
            & 23.99\,/\,49.84
            & 23.07\,/\,48.62
            & 22.56\,/\,47.83 \\
            \bottomrule
        \end{tabular*}
    \end{minipage}
    \vspace{-0.1in}
\end{table}

%% file: tables/answer_diversity.tex
\begin{minipage}[t]{0.36\linewidth}
    \centering
    \caption{Normalized answer entropy and collision rate on low-accuracy problems.}
    \label{tab:answer_diversity}
    \vspace{-0.1in}
    \small
    \setlength{\tabcolsep}{1pt}
    \renewcommand{\arraystretch}{1.0357}
    \begin{tabular}{p{\dimexpr0.37\linewidth-2\tabcolsep\relax}>{\centering\arraybackslash}p{\dimexpr0.30\linewidth-2\tabcolsep\relax}>{\centering\arraybackslash}p{\dimexpr0.33\linewidth-2\tabcolsep\relax}}
        \toprule
        \textbf{Method}
        & $H_{\mathrm{norm}}$ ($\uparrow$)
        & \textbf{Collision} ($\downarrow$) \\
        \midrule
        GRPO       & 0.657 & 0.166 \\
        EntropyAdv & 0.659 & 0.159 \\
        HAPO       & 0.647 & 0.173 \\
        80/20      & 0.672 & 0.161 \\
        RLRT       & 0.659 & 0.168 \\
        \rowcolor{ours!15}
        \textbf{\EAPO{} (Ours)} & \textbf{0.751} & \textbf{0.101} \\
        \bottomrule
    \end{tabular}
    \vspace{-0.08in}
\end{minipage}

%% file: tables/kappa_sensitivity.tex
\begin{wraptable}[9]{r}{0.38\textwidth}
    \vspace{-0.12in}
    \centering
    \caption{\textbf{Sensitivity to $\kappa$.} We report avg@32 / pass@32 (\%) across different values of $\kappa$.}
    \label{tab:kappa_sensitivity}
    \vspace{-0.1in}
    \small
    \setlength{\tabcolsep}{1pt}
    \renewcommand{\arraystretch}{1.04}
    \begin{tabular*}{\linewidth}{@{\extracolsep{\fill}}lcc@{}}
        \toprule
        $\kappa$ & \textbf{AIME25} & \textbf{AIME26} \\
        \midrule
        $0$ & 10.42\,/\,36.67 & 7.50\,/\,33.33 \\
        $\log 2$ & 10.73\,/\,33.33 & 10.63\,/\,26.67 \\
        $\log 4$ & 17.08\,/\,\textbf{46.67} & 15.73\,/\,\textbf{50.00} \\
        $\log 8$ & \textbf{18.23}\,/\,\textbf{46.67} & 15.83\,/\,40.00 \\
        $\log 16$ & 15.73\,/\,43.33 & \textbf{16.15}\,/\,46.67 \\
        \bottomrule
    \end{tabular*}
    \vspace{-0.1in}
\end{wraptable}

%% file: sections/6_related_work.tex
\section{Related Work}
\label{sec:related_work}

\paragraph{Fine-Grained Credit Assignment.}
Recent work supplements sparse outcome-level rewards in RLVR with finer-grained signals from learned process rewards~\citep{math-shepherd, prime}, intermediate value estimates~\citep{vineppo, spo}, or teacher- and oracle-conditioned likelihoods~\citep{rlsd, oppo}. These often require auxiliary models, rollouts, or privileged information. Other approaches use the policy's internal signals, deriving token importance from outcome sensitivity~\citep{oar, grail} or combining entropy with correctness~\citep{less}. Token-level advantages are also reweighted using selected-token probabilities or surprisal to modulate policy updates~\citep{a3po, opsa}.

\paragraph{RLVR for Exploratory Reasoning.}
To broaden exploration in RLVR, existing approaches allocate more samples to difficult problems~\citep{dars}, revisit promising states~\citep{rrl}, or branch over alternative continuations~\citep{deepsearch, treeadv}. Explicit diversity objectives encourage complementary outcomes and reasoning traces~\citep{outcomeexploration, diver, vpo}. Other methods regulate policy concentration~\citep{cov-entropy, steer, papo} or preserve low-probability alternatives~\citep{lpreg}. Token-level credit assignment supports exploration through reversed teacher signals~\citep{rlrt} or updates focused on high-entropy tokens~\citep{forking-tokens-8020, hapo, entropyadv}. In contrast, \EAPO{} couples policy entropy with the advantage sign for asymmetric credit assignment, without requiring auxiliary models, additional rollouts, or privileged information.

%% file: sections/7_conclusion.tex
\section{Conclusion}
\label{sec:conclusion}

In this work, we proposed \EAPO{}, an entropy-guided credit assignment method for improving exploration in LLM reasoning. By coupling entropy with the advantage sign, \EAPO{} reinforces uncertain decisions in successful responses and penalizes confident decisions in unsuccessful responses, while attenuating penalties on uncertain alternatives. Evaluations on reasoning tasks demonstrate consistent overall improvements over RLVR baselines designed to promote exploration, across both base and reasoning models. Our analyses further show that \EAPO{} effectively broadens problem coverage and increases response diversity. These findings highlight the value of asymmetric credit assignment for reinforcing \surprisingsuccess{} and correcting \repeatedfailure{} while sustaining exploration.

\input{sections/X_limitations}

\section*{Ethics Statement}

This work aims to improve exploration in LLM reasoning using existing models and reasoning datasets. However, models trained with \EAPO{} may retain biases from their underlying models and data, and improved reasoning performance does not guarantee reliable or safe outputs. We therefore encourage appropriate safeguards when applying the method beyond the reasoning tasks studied here or in real-world applications.

%% file: sections/X_limitations.tex
\section*{Limitations and Future Directions}
\label{sec:limitations}

\EAPO{} uses the policy's own entropy as an intrinsic signal for token-level credit assignment, allowing it to guide exploration without auxiliary models, additional information, or extra compute. However, this signal reflects the model's learned preferences and confidence, making the resulting allocation sensitive to its existing biases. The reliability of entropy-guided credit assignment therefore depends on how well the model's uncertainty aligns with the actual contribution of individual reasoning decisions to the final outcome.

While our work effectively expands exploration in LLM reasoning, our analysis of the underlying mechanism focuses on alternative continuations from a given reasoning state, leaving the explicit combination of findings across different states or trajectories unexplored. Recent advances in scientific discovery, however, illustrate the value of maintaining diverse candidate solutions and iteratively reusing or combining their useful components to generate new hypotheses and algorithms~\citep{alphaevolve, ai-co-scientist}. Motivated by these approaches, an important direction for future work is to investigate how exploration can be strengthened in such iterative discovery settings.

%% file: sections/appendix/1_extended_related_work.tex
\section{Extended Related Work}
\label{sec:extended_related_work}

\paragraph{Uncertainty-Guided RLVR.}
Many existing RLVR approaches concentrate learning on high-entropy regions or seek to increase policy entropy to promote exploration. Specifically, 80/20~\citep{forking-tokens-8020} restricts policy gradients to the highest-entropy 20\% of tokens, while HAPO~\citep{hapo} increases advantage magnitudes at high-entropy positions under both advantage signs. EntropyAdv~\citep{entropyadv} and UCAS~\citep{ucas} promote exploration by adding a nonnegative entropy bonus or subtracting a certainty-based penalty from token advantages, respectively, while limiting token-level shaping to one-sided additive adjustments. LESS~\citep{less}, on the other hand, targets confident reasoning patterns in low-entropy segments, reinforcing those associated with success and suppressing those associated with failure. Recent approaches also regulate updates to low-probability tokens, protecting useful alternatives from excessive suppression~\citep{lpreg} or suppressing low-probability branches to favor more confident reasoning alternatives~\citep{opsa}. These approaches weight or filter tokens by uncertainty without distinguishing between reinforcement and penalization. In contrast, \EAPO{} couples policy entropy with the response advantage sign to reverse this preference, strengthening uncertain successes while concentrating correction on confident positions in failed responses and attenuating penalties at uncertain positions to preserve exploratory alternatives.

\paragraph{Outcome-Conditioned Credit Assignment.}
Several recent approaches also seek to promote exploration by assigning credit differently to successful and failed responses. GEPO~\citep{gepo} attenuates positive advantages in low-entropy response groups to reduce over-exploitation and negative advantages in high-entropy groups to preserve exploration, while leaving token-level credit uniform within each response. RLRT~\citep{rlrt} reweights token-level advantages using reversed teacher--student likelihood ratios to reinforce successful exploration, but requires privileged context and additional teacher forward passes while leaving failed responses uniformly weighted. A3PO~\citep{a3po} leverages sampled-token probability to amplify advantage magnitudes for selected tokens, leaving the advantages of the remaining tokens unchanged. However, this can increase the overall advantage magnitude without attenuating penalties on potentially exploratory decisions, and sampled-token probability does not fully characterize predictive uncertainty, as replacing entropy with surprisal destabilizes asymmetric credit allocation (\Cref{app:entropy_vs_surprisal}). Our approach, \EAPO{}, uses full-vocabulary policy entropy to redistribute advantage across all token positions while preserving its within-response mean. By conditioning token weights on entropy and the response advantage sign, \EAPO{} both reinforces uncertain successes and preserves exploratory alternatives in failed responses without privileged context or additional teacher forward passes.

%% file: sections/appendix/2_add_exp_details.tex
\clearpage
\section{Additional Experimental Details}
\label{app:add_exp_details}

\subsection{Additional Details on the Resampling Experiment}
\label{app:resampling_details}

We provide further details on the experimental setup and selection statistics for the high- and low-entropy windows used in the resampling experiment in \Cref{sec:motivation}. Specifically, for each of Qwen3-4B-Base and Qwen3-8B-Base, we select 20 moderately difficult problems from AIME24/25/26, HMMT26, and AMC23, where moderate difficulty is defined as 6--14 correct responses out of 32 attempts by the corresponding model. For each problem, we randomly select two correct and two incorrect responses from these attempts. Within each selected response, we exclude the first and last 20\% of tokens and consider pairs of non-overlapping 32-token windows whose starting positions differ by at most 20\% of the response length. We select the pair with the largest difference in mean token entropy. We then fix the original prefix preceding each selected window and sample 64 continuations, measuring overlap as the mean pairwise common-prefix length over their first 32 generated tokens and reward as their final-answer accuracy.

\input{tables/resampling_positions}
We additionally analyze the relative starting positions of the selected high- and low-entropy windows for each model, expressed as a percentage of the original response length. \Cref{tab:resampling_positions} reports their means and standard deviations. Across both models, the high- and low-entropy windows have mean starting positions near the middle of the responses, with neither type consistently preceding the other, suggesting that window selection does not systematically favor earlier high-entropy windows or later low-entropy windows.

\subsection{Benchmark Details}
\label{app:benchmark_details}

We evaluate on 297 problems across six mathematical reasoning benchmarks: AIME24, AIME25, and AIME26~\citep{aime}, with 30 problems each; HMMT26~\citep{hmmt}, with 33 problems; AMC23~\citep{amc}, with 40 problems; and MATH500-H~\citep{math500-verify-step-by-step}, the level-5 subset of MATH500 containing 134 problems. We automatically grade the generated answers against the reference answers using Math-Verify~\citep{math-verify}. We sample 32 responses per mathematical reasoning problem and report avg@32 as the mean response accuracy and pass@32 as the probability of generating at least one correct response, using the unbiased estimator of \citet{unbiased-passk} for pass@$k$. We average scores over problems within each benchmark and give equal weight to the six benchmarks when reporting an overall mean.

To assess out-of-distribution generalization, we evaluate on eight tasks from Reasoning Gym~\citep{reasoning-gym}, covering logical, spatial, and algorithmic reasoning. The task headers in \Cref{tab:ood_results} correspond to graph coloring (Graph), color cube rotation (Cube), mini sudoku (Sudoku), family relationships (Family), knights \& knaves (K\&K), group anagrams (Anag.), zebra puzzles (Zebra), and palindrome generation (Palin.). We use the ``easy'' task configurations by default, but modify the settings for three tasks: graph coloring uses 12--14 vertices and an edge probability of 0.15, mini sudoku uses 4--6 empty cells, and zebra puzzles use three people and three characteristics. For each task, we evaluate 50 problems, sampling 16 responses per problem and using the task-specific verifier to assess full correctness, without counting partial credit as success. We report avg@16 within each task and compute the macro average by assigning equal weight to all eight tasks before rounding.

\subsection{Training Hyperparameters}
\label{app:training_hyperparameters}

\Cref{tab:training_hyperparameters} summarizes the shared training hyperparameters for \EAPO{} and the baselines, including the optimizer, batching, generation, and adapter settings. We train on DAPO-Math-17k-Processed using a DAPO-style configuration~\citep{dapo}, including asymmetric clipping, token-level loss aggregation, and a soft overlong penalty. Unless otherwise stated, all optimization-based methods use LoRA~\citep{lora} and are optimized with AdamW~\citep{adamw}. We implement training with TRL~\citep{trl} and use vLLM~\citep{vllm} for rollout generation. All experiments are conducted using a single NVIDIA H200 GPU. Parameters not listed in the table follow the defaults of the corresponding implementations.

\clearpage
\input{tables/training_hyperparameters}

%% file: tables/resampling_positions.tex
\begin{wraptable}{r}{0.45\textwidth}
    \vspace{-0.18in}
    \centering
    \caption{Starting positions of selected windows (\% of response length; mean $\pm$ std.).}
    \label{tab:resampling_positions}
    \vspace{-0.1in}
    \small
    \setlength{\tabcolsep}{2.5pt}
    \renewcommand{\arraystretch}{1.05}
    \begin{tabular*}{\linewidth}{@{\extracolsep{\fill}}lcc@{}}
        \toprule
        \textbf{Model} & \textbf{High entropy} & \textbf{Low entropy} \\
        \midrule
        Qwen3-4B-Base & $47.29\pm6.20$ & $53.73\pm7.11$ \\
        Qwen3-8B-Base & $51.44\pm8.02$ & $49.86\pm6.42$ \\
        \bottomrule
    \end{tabular*}
    \vspace{-0.08in}
\end{wraptable}

%% file: tables/training_hyperparameters.tex
\begin{table}[t]
    \centering
    \caption{Training hyperparameters shared by \EAPO{} and the baselines. Parameters not listed here follow the defaults of the original implementations.}
    \label{tab:training_hyperparameters}
    \vspace{-0.1in}
    \small
    \setlength{\tabcolsep}{6pt}
    \renewcommand{\arraystretch}{1.1}
    \begin{tabular}{@{}>{\raggedright\arraybackslash}p{0.17\linewidth}>{\raggedright\arraybackslash}p{0.39\linewidth}>{\raggedright\arraybackslash}p{\dimexpr0.44\linewidth-4\tabcolsep\relax}@{}}
        \toprule
        \textbf{Category} & \textbf{Parameter} & \textbf{Value} \\
        \midrule
        Data & Maximum response length & 10,240 (base); 38,912 (reasoning) \\
             & Soft overlong penalty onset & 8,192 (base); 32,768 (reasoning) \\
        \midrule
        Batching & Prompts per rollout iteration & 16 \\
                 & Responses per prompt ($G$) & 8 \\
                 & Generation batch size & 128 \\
                 & Mini-batch size & 64 \\
                 & Rollout iterations & 100 \\
        \midrule
        Optimization & Optimizer & AdamW \\
                     & Learning rate & $1\times 10^{-5}$ \\
                     & Learning-rate schedule & Constant with warmup \\
                     & Warmup steps & 10 \\
                     & Weight decay & $0.01$ \\
                     & Maximum gradient norm & $1.0$ \\
                     & Random seed & 42 \\
        \midrule
        LoRA & Rank ($r$) & 32 \\
             & Alpha ($\alpha$) & 64 \\
             & Dropout & $0$ \\
             & Target modules & Attention and MLP projections \\
        \midrule
        Generation & Inference engine & vLLM \\
                   & Temperature / top-$p$ & $1.0$ / $1.0$ \\
                   & Top-$k$ filtering & Disabled \\
        \midrule
        Policy loss & Clipping thresholds $(\epsilon_{\mathrm{low}},\epsilon_{\mathrm{high}})$ & $(0.2, 0.28)$ \\
                    & Loss aggregation & Token mean \\
                    & KL penalty coefficient & $0$ \\
                    & Entropy bonus coefficient & $0$ \\
                    & Importance sampling & Token-level TIS (cap $=2.0$) \\
                    & Advantage std normalization & Group \\
        \midrule
        \EAPO{} & Concentration ($\kappa$) & $\log 4$ \\
                & $\varepsilon_H$          & $10^{-8}$ \\
        \bottomrule
    \end{tabular}
    \vspace{-0.05in}
\end{table}

%% file: sections/appendix/3_add_exp_results.tex
\clearpage
\section{Additional Experimental Results}
\label{app:add_exp_results}

\subsection{Entropy vs. Surprisal}
\label{app:entropy_vs_surprisal}

We examine whether sampled-token surprisal can substitute for policy entropy in \EAPO{}. Specifically, the surprisal-based variant uses the normalized negative log-probability of the sampled token,
\begin{equation}
    \mathcal{S}_{i,t}
    = -\log\pi_{\mathrm{old}}(y_{i,t}\mid x,y_{i,<t}),
    \label{eq:token_surprisal}
\end{equation}
in place of entropy as the signal for the sign-dependent credit allocation in \Cref{eq:entropic_allocation}.

\input{figures/surprisal_s1_size}

\Cref{fig:surprisal} shows the training dynamics of entropy-based \EAPO{} and its surprisal-based variant on Qwen3-4B-Base. \EAPO{} continues to improve its training reward, whereas the surprisal-based variant shows declining reward after a few training steps and eventually collapses to zero reward. Although the expectation of surprisal under the policy equals entropy at a fixed prefix, sampling a low-probability token can yield high surprisal even under a concentrated policy. Consequently, surprisal-based reweighting may conflate rare token selections with uncertain decisions, potentially destabilizing credit allocation. In contrast, entropy summarizes the full next-token distribution independently of which token is sampled. These results suggest that the sampled-token surprisal of A3PO~\citep{a3po} is less suitable than entropy for dense asymmetric advantage redistribution.

\subsection{Extended Ablation on Sign--Entropy Coupling}
\label{app:ext_se_couple}

We provide an additional ablation on sign--entropy coupling with Qwen3-8B-Base, extending the analysis in \Cref{sec:ablation_studies}. As shown in \Cref{tab:sign_entropy_coupling_8b}, performance improves as the entropy preference shifts from high to low for penalization and from low to high for reinforcement, mirroring the trends observed with Qwen3-4B-Base. Specifically, with high-entropy reinforcement fixed ($b_{+}=+1$), shifting penalization from high to low entropy improves avg@32 by 4.25 percentage points. Similarly, with low-entropy penalization fixed ($b_{-}=-1$), shifting reinforcement from low to high entropy improves avg@32 by 6.22 percentage points, indicating that both directions contribute to the gains of \EAPO{}. Consequently, the default \EAPO{} configuration $(b_{+},b_{-})=(+1,-1)$, which combines both preferences, achieves the highest avg@32 and pass@32 among all nine combinations. These results further support \EAPO{}'s asymmetric allocation directions for reinforcement and penalization.

\input{tables/sign_entropy_coupling_8b}

\subsection{Performance under Matched Token Budgets}
\label{app:token_budget}

We examine whether longer responses alone explain the performance gains of \EAPO{}, extending baseline generations with \emph{budget forcing} following \citet{s1}. Specifically, we evaluate Qwen3-4B-Base on all 30 AIME24 problems with 32 responses per problem, comparing the initial model, GRPO, EntropyAdv, HAPO, and RLRT against \EAPO{} without forced continuations. For each baseline, we remove the termination token (or \texttt{</think>}) and append \texttt{Wait} to resume generation, allowing up to $K \in \{0,1,2,3,4\}$ continuations under a total output budget of 32,768 tokens. We use the same sampling settings as during training.

\Cref{fig:s1_resp_length} suggests that additional continuations do not necessarily translate into more effective exploration. Specifically, \EAPO{} achieves an avg@32 of 20.2\% with a response length of 5.3k tokens, whereas the strongest baseline, RLRT, peaks at 15.7\% with a mean response length of approximately 6.2k tokens. Most approaches show little benefit from early continuations and lose accuracy as responses are extended further. These results suggest that the performance gain of \EAPO{} stems from enhanced exploration, with increased response length arising as a consequence of this process.

\subsection{Scaling with Model Size}
\label{app:model_size}

To examine the scaling behavior of \EAPO{} as model size increases, we evaluate the Qwen3-Base family from 1.7B to 14B parameters, reporting macro-averaged avg@32 across the six mathematical reasoning benchmarks with $\pm 1$ standard error bars.\footnote{We compute standard error as $B^{-1}\sqrt{\sum_{b=1}^{B}s_b^2/N}$, where $B$ is the number of benchmarks (6), $N$ is the number of generations (32), and $s_b$ is the sample standard deviation of benchmark $b$'s accuracy over generations.} As shown in \Cref{fig:model_size}, performance improves with model size, while the relative performance trends among methods remain broadly consistent across scales, with \EAPO{} achieving the highest mean accuracy throughout. These results suggest that the benefits of asymmetric entropy-guided credit assignment persist as model capacity increases.

\subsection{LoRA vs. Full Fine-Tuning}
\label{app:lora_vs_full}

\input{tables/lora_vs_full}

Exploration can depend on retaining useful low-probability alternatives, motivating us to examine whether the benefits of entropy-guided credit assignment depend on the fine-tuning strategy. Our main experiments use LoRA~\citep{lora}, which constrains weight updates to a low-rank form and may therefore produce different changes in token probabilities from full fine-tuning. To assess whether \EAPO{}'s gains also hold under full fine-tuning, we compare \EAPO{} and the baselines under LoRA and full fine-tuning on Qwen3-4B-Base across the six mathematical reasoning benchmarks. To ensure comparable training conditions, we use a learning rate of $1\times 10^{-6}$ for full fine-tuning while keeping all other experimental settings identical to those used for LoRA.

\Cref{tab:lora_vs_full} shows that LoRA and full fine-tuning yield similar overall performance, with absolute differences in macro-averaged avg@32 of less than one percentage point on average across methods. Moreover, \EAPO{} maintains the best overall performance under both strategies, with its margin over the strongest baseline exceeding these shifts in performance. These results suggest that the benefits of asymmetric entropy-guided credit assignment are robust to the choice of fine-tuning strategy.

\subsection{Performance across Random Seeds}
\label{app:random_seeds}

\input{figures/random_seeds}
To evaluate whether the performance gains of \EAPO{} are robust to random seed selection, we evaluate GRPO, EntropyAdv, HAPO, and \EAPO{} on Qwen3-4B-Base across three distinct random seeds, extending our main experiment (which used a fixed seed of 42). \Cref{fig:random_seeds} reports the mean training reward and standard deviation across the three runs, with a 15-step moving average applied to the curves. \EAPO{} maintains higher mean reward than all three baselines over most of training, and the gap becomes particularly pronounced in the final stages, where its standard-deviation band lies above those of the baselines. This separation suggests that the reward improvement is substantial relative to the observed variation across seeds. Overall, these results demonstrate the consistency of \EAPO{}'s gains beyond single-seed evaluation.

%% file: figures/surprisal_s1_size.tex
\begin{figure}[t]
    \centering
    \includegraphics[width=\linewidth,clip]{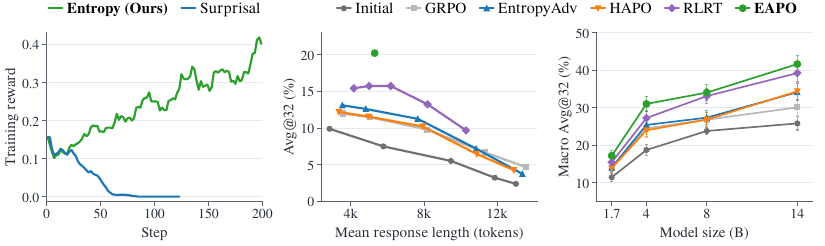}
    \par\vspace{-0.05in}
    \captionsetup{margin={0pt,5pt},justification=raggedright,singlelinecheck=false}
    \begin{minipage}[t]{\dimexpr\linewidth/3\relax}
        \vspace{-0.05in}
        \centering
        \captionof{figure}{Entropy vs. Surprisal\\as signals for credit allocation.}
        \label{fig:surprisal}
        \vspace{-0.2in}
    \end{minipage}%
    \hfill
    \begin{minipage}[t]{\dimexpr\linewidth/3\relax}
        \vspace{-0.05in}
        \centering
        \captionsetup{margin={1.5bp,5pt}}
        \captionof{figure}{Response length vs.\\accuracy with budget forcing.}
        \label{fig:s1_resp_length}
        \vspace{-0.2in}
    \end{minipage}%
    \hfill
    \begin{minipage}[t]{\dimexpr\linewidth/3\relax}
        \vspace{-0.05in}
        \centering
        \captionsetup{margin={3bp,3pt}}
        \captionof{figure}{Scaling with model\\size in the Qwen3-Base family.}
        \label{fig:model_size}
        \vspace{-0.2in}
    \end{minipage}%
\end{figure}

%% file: tables/sign_entropy_coupling_8b.tex
\begin{table}[h]
    \centering
    \begin{minipage}{0.65\linewidth}
        \caption{\textbf{Sign--entropy coupling on Qwen3-8B-Base.} Macro average of avg@32 / pass@32 (\%) across six benchmarks. Our default \EAPO{} configuration is {\setlength{\fboxsep}{0.5pt}\colorbox{ours!15}{shaded}}. Best scores are \textbf{bolded}.}
        \label{tab:sign_entropy_coupling_8b}
        \vspace{-0.1in}
        \small
        \setlength{\tabcolsep}{2pt}
        \renewcommand{\arraystretch}{1.25}
        \begin{tabular*}{\linewidth}{@{\extracolsep{\fill}}clccc@{}}
            \toprule
            & \textbf{Entropy} & \multicolumn{3}{c}{\textbf{\textcolor{failure}{Penalization}} ($\hat{A}^i<0$): $b_{-}$} \\
            & \textbf{Preference}
            & Low ($-1$) & Uniform ($0$) & High ($+1$) \\
            \midrule
            \multirow{3}{*}{\shortstack{\textbf{\textcolor{success}{Reinforcement}}\\($\hat{A}^i>0$): $b_{+}$}}
            & High ($+1$)
            & \cellcolor{ours!15}\textbf{34.02\,/\,65.55}
            & 32.14\,/\,63.62
            & 29.77\,/\,52.40 \\
            & Uniform ($0$)
            & 30.55\,/\,59.52
            & 26.78\,/\,52.18
            & 27.48\,/\,52.56 \\
            & Low ($-1$)
            & 27.80\,/\,54.68
            & 25.26\,/\,50.27
            & 24.63\,/\,49.82 \\
            \bottomrule
        \end{tabular*}
    \end{minipage}
\end{table}

%% file: tables/lora_vs_full.tex
\begin{table}[t]
    \centering
    \caption{LoRA vs. full fine-tuning on Qwen3-4B-Base across mathematical reasoning benchmarks.}
    \label{tab:lora_vs_full}
    \vspace{-0.1in}
    \small
    \setlength{\tabcolsep}{2.5pt}
    \resizebox{\linewidth}{!}{%
    \begin{tabular}{>{\raggedright\arraybackslash}p{2.0cm}>{\raggedright\arraybackslash}p{1.0cm}*{12}{>{\centering\arraybackslash}p{0.9cm}}}
        \toprule
        \multirow{2}{*}{\textbf{Method}}
        & \multirow{2}{*}{\textbf{Mode}}
        & \multicolumn{2}{c}{\textbf{AIME24}}
        & \multicolumn{2}{c}{\textbf{AIME25}}
        & \multicolumn{2}{c}{\textbf{AIME26}}
        & \multicolumn{2}{c}{\textbf{HMMT26}}
        & \multicolumn{2}{c}{\textbf{AMC23}}
        & \multicolumn{2}{c}{\textbf{MATH500-H}} \\
        \cmidrule(lr){3-4}\cmidrule(lr){5-6}\cmidrule(lr){7-8}
        \cmidrule(lr){9-10}\cmidrule(lr){11-12}\cmidrule(lr){13-14}
        & & {\scriptsize Avg@32} & {\scriptsize Pass@32}
        & {\scriptsize Avg@32} & {\scriptsize Pass@32}
        & {\scriptsize Avg@32} & {\scriptsize Pass@32}
        & {\scriptsize Avg@32} & {\scriptsize Pass@32}
        & {\scriptsize Avg@32} & {\scriptsize Pass@32}
        & {\scriptsize Avg@32} & {\scriptsize Pass@32} \\
        \midrule
        \multirow{2}{*}{GRPO}
            & LoRA & 12.5 & 33.3 & 10.4 & 36.7 & 7.5 & 33.3 & 5.9 & 24.2 & \textbf{54.5} & 90.0 & 56.1 & 85.1 \\
            & Full & \textbf{13.1} & \textbf{36.7} & \textbf{12.8} & 36.7 & \textbf{10.2} & 33.3 & \textbf{6.9} & \textbf{27.3} & 53.7 & \textbf{92.5} & \textbf{58.2} & \textbf{87.3} \\
        \midrule
        \multirow{2}{*}{EntropyAdv}
            & LoRA & \textbf{13.1} & \textbf{46.7} & \textbf{12.5} & 36.7 & 10.0 & 33.3 & 6.3 & 27.3 & \textbf{53.8} & \textbf{90.0} & 56.7 & \textbf{86.6} \\
            & Full & 13.0 & 36.7 & 11.0 & 36.7 & 10.0 & \textbf{36.7} & \textbf{7.7} & \textbf{30.3} & 52.5 & 87.5 & \textbf{57.2} & 84.3 \\
        \midrule
        \multirow{2}{*}{HAPO}
            & LoRA & \textbf{12.2} & 36.7 & \textbf{11.0} & 40.0 & 8.9 & 30.0 & \textbf{6.5} & \textbf{24.2} & 52.0 & 87.5 & 53.2 & \textbf{85.1} \\
            & Full & 11.9 & 36.7 & 10.5 & 40.0 & 8.9 & 30.0 & 6.3 & 21.2 & \textbf{55.3} & \textbf{92.5} & \textbf{54.9} & 82.8 \\
        \midrule
        \multirow{2}{*}{80/20}
            & LoRA & \textbf{11.9} & 33.3 & \textbf{12.7} & 40.0 & \textbf{8.6} & 30.0 & \textbf{6.6} & 24.2 & 54.1 & \textbf{95.0} & 52.8 & 84.3 \\
            & Full & 11.6 & 33.3 & 9.2 & 40.0 & 6.9 & 30.0 & 6.1 & \textbf{27.3} & \textbf{55.3} & 90.0 & \textbf{53.0} & \textbf{87.3} \\
        \midrule
        \multirow{2}{*}{RLRT}
            & LoRA & 15.3 & 50.0 & 12.7 & 33.3 & \textbf{14.2} & \textbf{40.0} & \textbf{8.9} & \textbf{36.4} & 54.9 & \textbf{95.0} & 57.4 & 86.6 \\
            & Full & \textbf{19.1} & \textbf{56.7} & \textbf{13.9} & \textbf{36.7} & 14.0 & 36.7 & 8.5 & 30.3 & \textbf{57.0} & 92.5 & \textbf{59.7} & \textbf{88.1} \\
        \midrule
        \rowcolor{ours!15}
            & LoRA & \textbf{20.2} & \textbf{53.3} & 17.1 & \textbf{46.7} & \textbf{15.7} & \textbf{50.0} & \textbf{13.4} & 33.3 & 56.3 & 92.5 & \textbf{63.3} & 91.8 \\
        \rowcolor{ours!15}
        \multirow{-2}{*}{\textbf{\EAPO{} (Ours)}}
            & Full & 18.3 & 50.0 & \textbf{18.2} & 43.3 & 15.2 & 46.7 & 12.3 & \textbf{36.4} & \textbf{57.7} & 92.5 & 62.8 & 91.8 \\
        \bottomrule
    \end{tabular}%
    }
\end{table}

%% file: figures/random_seeds.tex
\begin{wrapfigure}{r}{0.40\textwidth}
    \centering
    \vspace{-\intextsep}
    \includegraphics[width=\linewidth,clip]{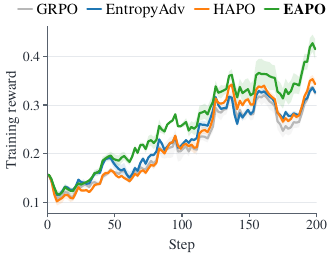}
    \par\vspace{-0.05in}
    \captionsetup{skip=3pt}
    \caption{Training reward across three seeds (mean $\pm$ standard deviation).}
    \label{fig:random_seeds}
    \vspace{-0.1in}
\end{wrapfigure}

%% file: sections/appendix/4_theoretical_analysis.tex
\clearpage
\section{Proofs and Additional Theoretical Analysis}
\label{app:theoretical_analysis}

We provide the proofs of \Cref{prop:kl_allocation,prop:alternative_concentration} (\Cref{app:kl_allocation_proof,app:concentration_proof}) and present an information-theoretic motivation for entropy-guided allocation under a binary decision model (\Cref{app:allocation_implications}).

\subsection{KL-Regularized Uncertainty Allocation}
\label{app:kl_allocation_proof}

The exponential allocation in \Cref{eq:entropic_allocation} is motivated by extending KL anchoring from policy optimization~\citep{grpo} to token-level credit allocation, favoring greater signed uncertainty while anchoring credit to a uniform reference. We formalize this motivation through \Cref{eq:kl_allocation}, where $u_i$ is the reference distribution and $\sign(\hat{A}^i)h_{i,t}$ is the utility. The resulting optimization is a finite-dimensional instance of the Gibbs variational principle~\citep{donsker-varadhan}.

\noindent\textbf{\Cref{prop:kl_allocation} (Restated).}
\begingroup\itshape
Let $u_{i,t}=1/T_i$ be uniform credit over $\mathcal{I}_i$, and let $\Delta(\mathcal{I}_i)$ denote the probability simplex on these positions. For $\kappa>0$, the problem
\begin{equation}
    \max_{q\in\Delta(\mathcal{I}_i)}
    \left\{
        \sign(\hat{A}^i)\E_{t\sim q}[h_{i,t}]
        -\frac{1}{\kappa}\KL(q\Vert u_i)
    \right\}
    \label{eq:kl_allocation_restated}
\end{equation}
has the unique solution $q^*_{i,t}\propto u_{i,t}\exp\!\left(\kappa\sign(\hat{A}^i)h_{i,t}\right)$.
\par\endgroup

\begin{proof}
Let $s_i=\sign(\hat{A}^i)$ and define
\begin{equation}
    Z_i=\sum_{t\in\mathcal{I}_i}u_{i,t}\exp\!\left(\kappa s_i h_{i,t}\right),
    \qquad
    q_{i,t}^*=\frac{u_{i,t}\exp\!\left(\kappa s_i h_{i,t}\right)}{Z_i}.
    \label{eq:kl_allocation_solution}
\end{equation}
For any $q\in\Delta(\mathcal{I}_i)$, the objective in \Cref{eq:kl_allocation_restated} can be written as
\begin{equation}
    s_i\E_{t\sim q}[h_{i,t}]-\frac{1}{\kappa}\KL(q\Vert u_i)
    =\frac{\log Z_i}{\kappa}-\frac{1}{\kappa}\KL(q\Vert q^*).
    \label{eq:kl_allocation_decomposition}
\end{equation}
Since $\KL(q\Vert q^*)\ge0$, with equality if and only if $q=q^*$, the allocation $q^*$ is the unique global maximizer. Moreover, $u_{i,t}=1/T_i$ and \Cref{eq:entropic_allocation} give $q_{i,t}^*=w_{i,t}/T_i$.
\end{proof}

\subsection{Concentration of Unchosen Alternatives}
\label{app:concentration_proof}

We examine how stronger negative updates change the distribution over unchosen tokens, providing a theoretical rationale for \EAPO{}'s penalty attenuation. At a fixed prefix, let $\vz$ be the initial logits, $\vp=\softmax(\vz)$ the corresponding token distribution, and $a$ the sampled token. For fixed $\hat{A}<0$ and allocation weight $w$, we consider one policy-gradient step on $\vz$:
\begin{equation}
    \vz^{(\beta)}=\vz-\beta(\ve_a-\vp),
    \qquad \beta=\eta|\hat{A}|w\ge0,
    \qquad \vp^{(\beta)}=\softmax(\vz^{(\beta)}),
    \label{eq:negative_logit_step}
\end{equation}
where $\ve_a$ is the one-hot vector for $a$, $\eta>0$ is the effective step size, and $\beta$ controls the strength of the negative update. We exclude the sampled token and renormalize the remaining probabilities to obtain the conditional distribution over unchosen tokens:
\begin{equation}
    \nu_v^{(\beta)}
    =\frac{p_v^{(\beta)}}{1-p_a^{(\beta)}},
    \qquad v\in\mathcal{V}\setminus\{a\}.
\end{equation}
Here, $p_v^{(\beta)}$ is the probability of token $v$ after the update and $\nu=\nu^{(0)}$ is the initial conditional distribution. The entropy of $\nu^{(\beta)}$ measures concentration among the alternatives, while $\KL(\nu^{(\beta)}\Vert\nu)$ measures their deviation from the initial proportions.

\clearpage
\noindent\textbf{\Cref{prop:alternative_concentration} (Restated).}
\begingroup\itshape
For a single negative logit update with finite logits and the initial distribution held fixed, the sampled-token probability strictly decreases with $\beta$, while $H(\nu^{(\beta)})$ is nonincreasing and $\KL(\nu^{(\beta)}\Vert\nu)$ is nondecreasing.
\par\endgroup

\begin{proof}
Let us assume finite logits over a finite vocabulary with $|\mathcal{V}|\ge2$, so all initial and updated probabilities are positive and $0<p_a^{(\beta)}<1$. For any $v\ne a$, subtracting the updated logits gives
\begin{equation}
    \log\frac{p_a^{(\beta)}}{p_v^{(\beta)}}
    =\log\frac{p_a}{p_v}-\beta(1-p_a+p_v).
    \label{eq:negative_log_odds}
\end{equation}
Every coefficient $1-p_a+p_v$ is positive, so each ratio $p_v^{(\beta)}/p_a^{(\beta)}$ strictly increases with $\beta$. Therefore, $p_a^{(\beta)}=(1+\sum_{v\ne a}p_v^{(\beta)}/p_a^{(\beta)})^{-1}$ establishes strict decrease of the sampled-token probability.

For unchosen tokens, the update adds $\beta p_v$ to each logit. Conditioning on this set cancels the full softmax normalizer, yielding
\begin{equation}
    \nu_v^{(\beta)}
    =\frac{\nu_v e^{\beta p_v}}{Z_a(\beta)},
    \qquad
    Z_a(\beta)=\sum_{u\ne a}\nu_u e^{\beta p_u}.
    \label{eq:alternative_tilt}
\end{equation}
Let $\psi_a(\beta)=\log Z_a(\beta)$. Differentiation gives $\psi_a'(\beta)=\E_{v\sim\nu^{(\beta)}}[p_v]$ and $\psi_a''(\beta)=\Var_{v\sim\nu^{(\beta)}}(p_v)$. The same calculation gives $\frac{d}{d\beta}\E_{\nu^{(\beta)}}[\log\nu_v]=\Cov_{\nu^{(\beta)}}(\log\nu_v,p_v)$.

Writing $H(\nu^{(\beta)})=-\E_{\nu^{(\beta)}}[\log\nu_v]-\beta\psi_a'(\beta)+\psi_a(\beta)$, we obtain
\begin{equation}
    \frac{d}{d\beta}H(\nu^{(\beta)})
    =-\Cov_{\nu^{(\beta)}}(\log\nu_v,p_v)
    -\beta\Var_{\nu^{(\beta)}}(p_v)
    \le0.
    \label{eq:alternative_entropy_derivative}
\end{equation}
To verify the inequality, let $V,U$ be independent draws from $\nu^{(\beta)}$. Then
\begin{equation}
    \Cov_{\nu^{(\beta)}}(\log\nu_v,p_v)
    =\frac{1}{2}\E\!\left[
        (\log\nu_V-\log\nu_U)(p_V-p_U)
    \right]\ge0,
\end{equation}
because $\log\nu_v=\log p_v-\log(1-p_a)$ is increasing in $p_v$.

Similarly, \Cref{eq:alternative_tilt} implies
\begin{equation}
    \KL(\nu^{(\beta)}\Vert\nu)
    =\beta\psi_a'(\beta)-\psi_a(\beta),
    \qquad
    \frac{d}{d\beta}\KL(\nu^{(\beta)}\Vert\nu)
    =\beta\Var_{\nu^{(\beta)}}(p_v)\ge0.
    \label{eq:alternative_kl_derivative}
\end{equation}
The entropy and KL remain unchanged when all alternatives have the same initial probability.
\end{proof}

\subsection{Implications for Entropy-Guided Allocation}
\label{app:allocation_implications}

We further examine how feedback information varies with entropy under success and failure, providing an information-theoretic interpretation of \EAPO{}'s asymmetric credit allocation. Consider a binary decision with correct branch $B\in\{0,1\}$ and belief $P_p(B=1)=p\in(0,1)$. We assume that the policy samples $a$ independently of $B$, with $\Pr(a=1)=p$, and receives feedback $R=\mathbf{1}\{a=B\}$. We define \emph{feedback information} as the expected posterior correction conditional on the outcome:
\begin{equation}
    J_r(p)=\E\!\left[
        \KL\!\left(P_p(B\mid a,R)\Vert P_p(B)\right)
        \,\middle|\, R=r
    \right],
    \qquad r\in\{0,1\}.
    \label{eq:conditional_feedback_information}
\end{equation}

\begin{proposition}[Asymmetric information from success and failure]
\label{prop:feedback_information}
For the binary decision model, $J_1(p)$ is strictly increasing in the binary entropy $H(p)$, whereas $J_0(p)$ is strictly decreasing in $H(p)$.
\end{proposition}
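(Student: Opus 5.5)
The plan is to compute $J_1$ and $J_0$ in closed form, use the symmetry $p\leftrightarrow 1-p$ to reduce ``monotone in $H(p)$'' to ``monotone in $p$ on $(0,\tfrac12]$'', and then check the sign of a derivative. Write $q=1-p$.

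\textbf{Closed forms.} Since $R=\mathbf{1}\{a=B\}$ and $a$ is observed, the pair $(a,R)$ determines $B$ exactly. On $R=1$ we have $B=a$; on $R=0$ we have $B=1-a$. The posterior $P_p(B\mid a,R)$ is therefore a point mass, and its KL divergence to the prior equals $-\log P_p(B=b)$ at the revealed value $b$.

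By independence of $a$ and $B$:
\begin{equation*}
\Pr(a=1\mid R=1)=\frac{p^2}{p^2+q^2},\qquad \Pr(a=1\mid R=0)=\frac{pq}{2pq}=\tfrac12 .
\end{equation*}
Hence
\begin{equation*}
J_1(p)=-\frac{p^2\log p+q^2\log q}{p^2+q^2},\qquad J_0(p)=-\tfrac12\log(pq).
\end{equation*}

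\textbf{Reduction via symmetry.} Both expressions are invariant under $p\leftrightarrow q$, as is $H(p)$. So each is a well-defined function of $H$. Moreover, $H$ is a strictly increasing bijection from $(0,\tfrac12]$ onto $(0,\log 2]$. It therefore suffices to show that, on $(0,\tfrac12]$, $J_1$ is strictly increasing in $p$ and $J_0$ is strictly decreasing in $p$.

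\textbf{The case $J_0$.} On $(0,\tfrac12]$ the product $pq=p(1-p)$ is strictly increasing. Hence $-\tfrac12\log(pq)$ is strictly decreasing, which settles $J_0$.

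\textbf{The case $J_1$ (main obstacle).} Write $J_1=N/D$ with $N=-(p^2\log p+q^2\log q)$ and $D=p^2+q^2$. Differentiating in $p$ (using $dq/dp=-1$) gives
\begin{equation*}
N'=2(q\log q-p\log p)+(q-p),\qquad D'=-2(q-p).
\end{equation*}
So $J_1'$ has the sign of $N'D+2(q-p)N$.

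I would substitute $t=p/q\in(0,1)$ and divide out positive factors, aiming to reduce the claim to a one-variable inequality $g(t)>0$ on $(0,1)$. That inequality should involve $\log t$ and $\log(1+t)$, since $p=t/(1+t)$ and $q=1/(1+t)$.

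My first attempt at $g(t)>0$ is to check $g(1)=0$ and then show $g$ is monotone on $(0,1)$ by differentiating once more, hoping to land on a standard bound such as $\log t<t-1$ or $\log t<2(t-1)/(t+1)$ for $t<1$.

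As a fallback, I would use the representation of $J_1$ as a weighted average of $-\log p$ and $-\log q$ with weights $p^2/D$ and $q^2/D$. Decreasing $p$ shifts weight toward the smaller term $-\log q$, so I would try to show this effect dominates the growth of $-\log p$. The boundary values $J_1\to 0$ as $p\to0$ and $J_1(\tfrac12)=\log 2$ serve as consistency checks.
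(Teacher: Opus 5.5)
Your overall route matches the paper's proof: the point-mass posterior gives the same closed forms for $J_1$ and $J_0$, the $p\leftrightarrow q$ symmetry reduces the claim to monotonicity in $p$ on one side of $\tfrac12$ (you take $p<\tfrac12$, the paper takes $p>\tfrac12$), and your $J_0$ argument is complete. The gap is the $J_1$ case. You correctly reduce it to the sign of $N'D+2(q-p)N$, but you never determine that sign. You only outline a substitution $t=p/q$ and hope to reach a bound such as $\log t<t-1$. The weighted-average fallback is a heuristic, not an argument. The outlined route also would not work as stated. The $\log(1+t)$ terms you expect actually cancel. Clearing denominators gives $g(t)=(1-t)(1+t^2)-2t(1+t)\log t$, which has $g(1)=0$ but is not monotone on $(0,1)$: it rises from $g(0^+)=1$ to $g(0.1)\approx 1.42$. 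So ``$g(1)=0$ plus monotonicity'' would stall.

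What is missing is to finish expanding the numerator you already wrote down. In $N'D+2(q-p)N$, the coefficient of $\log q$ is $2q(p^2+q^2)-2(q-p)q^2=2pq$, and the coefficient of $\log p$ is $-2pq$. Hence
\[
N'D+2(q-p)N=(q-p)(p^2+q^2)+2pq\log\frac{q}{p},
\qquad\text{so}\qquad
J_1'(p)=\frac{q-p}{D}+\frac{2pq}{D^2}\log\frac{q}{p}.
\]
For $0<p<\tfrac12$ both terms are strictly positive, equivalently both summands of $g$ are positive on $(0,1)$. So $J_1$ is strictly increasing in $p$ there, hence in $H(p)$, and no auxiliary logarithmic inequality is needed. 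This is exactly the paper's formula $J_1'(p)=-\frac{2p-1}{D}-\frac{2p(1-p)}{D^2}\log\frac{p}{1-p}$, which the paper reads on $p>\tfrac12$, where both terms are negative. With this line added, your proof is complete and coincides with the paper's.
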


\begin{proof}
Let $D=p^2+(1-p)^2$. The pair $(a,R)$ identifies $B$, so the posterior KL is $-\log P_p(B)$. Conditional on success, the probabilities of $B=1$ and $B=0$ are $p^2/D$ and $(1-p)^2/D$, whereas on failure both are $1/2$ because each mismatched pair has probability $p(1-p)$. Substituting into \Cref{eq:conditional_feedback_information} gives $J_1(p)=-\frac{p^2\log p+(1-p)^2\log(1-p)}{D}$ and $J_0(p)=-\frac{1}{2}\log\!\left(p(1-p)\right)$. By symmetry and continuity, it suffices to consider $p>1/2$, where $H'(p)=-\log\frac{p}{1-p}<0$ and
\begin{equation}
    J_1'(p)=-\frac{2p-1}{D}
        -\frac{2p(1-p)}{D^2}\log\frac{p}{1-p}<0,
    \qquad
    J_0'(p)=\frac{2p-1}{2p(1-p)}>0.
    \label{eq:feedback_information_derivatives}
\end{equation}
\end{proof}

\Cref{prop:feedback_information} shows that success leads to a larger expected correction of the prior belief at uncertain decisions, whereas failure does so at confident decisions. This matches \EAPO{}'s preference for high entropy under success and low entropy under failure.

Uniform allocation gives every decision the same share of credit, while \EAPO{} shifts credit toward decisions with greater expected feedback information. For $T$ decisions with the same outcome $r$, let $J_t=J_r(p_t)$ and let $q_t^{\mathrm{E}}$ denote the fraction of credit assigned to decision $t$ by \Cref{eq:entropic_allocation}. Then
\begin{equation}
    \E_{q^{\mathrm{E}}}[J_t]-\E_u[J_t]
    =\frac{\Cov_u\!\left(J_t,e^{\kappa s h_t}\right)}
        {\E_u[e^{\kappa s h_t}]}\ge0,
    \label{eq:feedback_information_allocation}
\end{equation}
where $u_t=1/T$, $s=2r-1$, and $h_t$ is the normalized entropy from \Cref{eq:normalized_entropy}. The covariance is nonnegative because entropy normalization preserves the ordering and larger credit shares correspond to larger $J_t$. Thus, in this binary model, the average feedback information weighted by \EAPO{} is larger than the uniform average when $\kappa>0$ and the normalized entropies are nonuniform. Favoring high entropy under both outcomes would instead shift credit toward less informative decisions under failure. This supports \EAPO{}'s asymmetric credit allocation by showing that it prioritizes more informative decisions under both outcomes, while \Cref{app:concentration_proof} supports attenuating penalties at uncertain decisions to preserve alternatives for further exploration.

%% file: sections/appendix/5_qualitative_example.tex
\clearpage
\section{Example of Surprising Success, Repeated Failure}
\label{app:qualitative_example_succ_fail}

\Cref{fig:qualitative_success_failure} illustrates \repeatedfailure{} and \surprisingsuccess{} in 32 rollouts from the initial Qwen3-4B-Base model on a MATH500 problem: all 28 AM--GM rollouts fail, whereas one of four Lagrange-multiplier rollouts succeeds. Specifically, the model repeatedly applies AM--GM to $x^4$, $4y^2$, and $4z^4$, resulting in the invalid substitution $x^4y^2z^4=(xyz)^4$. Yet a successful alternative remains within AM--GM: splitting $4y^2$ into $2y^2+2y^2$ allows the same inequality to establish the correct minimum of $16$. Meanwhile, the less frequently attempted Lagrange-multiplier approach also reaches this minimum, illustrating a successful alternative that remains rare under the current policy and should be reinforced to make such success more repeatable.

\input{figures/qualitative_success_failure}

%% file: figures/qualitative_success_failure.tex
\begin{figure}[h]
    \centering
    \begin{minipage}{\linewidth}
        \fontsize{9}{11.5}\selectfont
        \setlength{\parindent}{0pt}
        \setlength{\parskip}{0pt}

        \begin{qualitativebox}[colframe=qualitativeframe,boxrule=0.55pt,top=8pt,bottom=8pt,before skip=0pt]
        \begin{qualitativebox}[colback=qualitativeproblem,boxrule=0pt,top=6pt,bottom=6pt,before skip=0pt]
            {
                \textbf{Problem.} Let $x$, $y$, and $z$ be positive real numbers such that $xyz=2$.
                Find the minimum value of
                \par\vspace{4pt}
                {\centering $\displaystyle x^4+4y^2+4z^4.$\par}
            }
        \end{qualitativebox}

        \begin{qualitativerow}
            \textbf{Sample 0}
            \par\vspace{5pt}
            {\raggedright We will apply the \textcolor{failure}{AM-GM Inequality} to the terms $x^4$, $4y^2$, and $4z^4$:\par}
            \vspace{4pt}
            \qualitativestep{\qualitativeomitted\quad
                x^4+4y^2+4z^4\geq 3\sqrt[3]{16x^4y^2z^4},
                \qquad x^4y^2z^4=(xyz)^4=2^4=16.}
            \qualitativestep{\qualitativeomitted\quad
                x^4+4y^2+4z^4=4\sqrt{2}+4\sqrt{2}+4\sqrt{2}=\textcolor{failure}{\boxed{12}}.}
        \end{qualitativerow}

        \qualitativeseparator
        \begin{qualitativerow}
            \textbf{Sample 4}
            \par\vspace{5pt}
            {\raggedright We start by applying the \textcolor{failure}{AM-GM inequality} to the terms $x^4$, $4y^2$, and $4z^4$:\par}
            \vspace{4pt}
            \qualitativestep{\qualitativeomitted\quad
                x^4+4y^2+4z^4\geq 3\sqrt[3]{16x^4y^2z^4},
                \qquad x^4y^2z^4=(xyz)^4=2^4=16.}
            {\raggedright $[\ldots]$ However, we need to recheck the equality condition and the value of $k$.\par}
            \vspace{4pt}
            \qualitativestep{\qualitativeomitted\quad
                x^4+4y^2+4z^4=2^4+4\cdot 1^2+4\cdot 1^4=16+4+4=\textcolor{failure}{\boxed{24}}.}
        \end{qualitativerow}

        \qualitativeseparator
        \begin{qualitativerow}
            \textbf{Sample 10}
            \par\vspace{5pt}
            {\raggedright To find the minimum value, we need to check if equality can be achieved in the \textcolor{failure}{AM-GM inequality}. Equality holds when all the terms are equal:\par}
            \vspace{4pt}
            \qualitativestep{x^4=4y^2=4z^4.}
            \qualitativestep{\qualitativeomitted\quad
                x^4=4y^2=4z^4=4\sqrt{2},\qquad
                x^4+4y^2+4z^4=3\cdot4\sqrt{2}=\textcolor{failure}{\boxed{12\sqrt{2}}}.}
        \end{qualitativerow}

        \qualitativeseparator
        \begin{qualitativerow}
            \textbf{Sample 5}
            \par\vspace{5pt}
            {\raggedright We will use the method of \textcolor{success}{Lagrange multipliers} to find the minimum value of the function $f(x,y,z)=x^4+4y^2+4z^4$ subject to the constraint $g(x,y,z)=xyz-2=0$.\par}
            \vspace{4pt}
            \qualitativestep{-\frac{4x^3}{yz}=-\frac{8y}{xz}\ \Longrightarrow\ x^4=2y^2,
                \qquad -\frac{4x^3}{yz}=-\frac{16z^3}{xy}\ \Longrightarrow\ x^4=4z^4.}
            \qualitativestep{\qualitativeomitted\quad (x,y,z)=(\sqrt{2},\sqrt{2},1),
                \qquad x^4+4y^2+4z^4=4+8+4=\textcolor{success}{\boxed{16}}.}
        \end{qualitativerow}
        \end{qualitativebox}
    \end{minipage}
    \vspace{-0.05in}
    \caption{\textbf{Qualitative example of \surprisingsuccess{} and \repeatedfailure{}.} Among 32 rollouts of this problem, 28 use the AM--GM inequality and all fail, while four use Lagrange multipliers and only one succeeds. The model repeatedly applies AM--GM and makes the invalid substitution $x^4y^2z^4=(xyz)^4$, yet splitting $4y^2$ into $2y^2+2y^2$ would enable the same approach to reach the correct minimum of $16$. The less frequently sampled Lagrange-multiplier approach yields the sole observed correct response, motivating reinforcement of this rare success to make it more repeatable.}
    \label{fig:qualitative_success_failure}
\end{figure}